\documentclass{article}

\PassOptionsToPackage{numbers, compress, sort}{natbib}
\usepackage[preprint]{neurips_2026}

\usepackage[utf8]{inputenc}
\usepackage[T1]{fontenc}
\usepackage[hidelinks]{hyperref}
\usepackage{url}
\usepackage{booktabs}
\usepackage{amsfonts}
\usepackage{amsmath}
\usepackage{amssymb}
\usepackage{amsthm}
\usepackage{nicefrac}
\usepackage{microtype}
\usepackage[dvipsnames,table]{xcolor}
\usepackage{graphicx}
\usepackage{float}
\usepackage{algorithm}
\usepackage{algpseudocode}
\usepackage{multirow}
\usepackage{siunitx}
\usepackage{enumitem}
\usepackage[skins]{tcolorbox}
\usepackage{wrapfig}
\usepackage{makecell}
\usepackage{subcaption}
\usepackage{arydshln}
\usepackage{placeins}
\usepackage{tabularx}

\newcommand{\widegraphics}[2][0.88\linewidth]{%
  \includegraphics[width=#1]{#2}%
}
\newcommand{\widecaption}[1]{\captionsetup{width=0.98\linewidth}\caption{#1}}
\newcommand{\widetablebox}[1]{\makebox[\linewidth][c]{#1}}
\AtBeginDocument{%
  \newgeometry{%
    textheight=9.05in,
    textwidth=6.35in,
    top=0.85in,
    headheight=12pt,
    headsep=18pt,
    footskip=30pt
  }%
  \raggedbottom
}

\newtheorem{proposition}{Proposition}

\newcommand{\clip}{\operatorname{clip}}

\newcommand{\pit}[1]{\pi_\theta(\cdot\mid x,o_{<t}^{(#1)})}
\newcommand{\pitr}[1]{\pi_\theta(\cdot\mid x,r,o_{<t}^{(#1)})}

\definecolor{tchcol}{RGB}{30,80,170}
\definecolor{prvcol}{RGB}{178,34,34}
\definecolor{bsdcol}{RGB}{34,139,34}

\newcommand{\prv}[1]{\textcolor{prvcol}{#1}}

\title{Tailoring Teaching to Aptitude: Direction-Adaptive Self-Distillation for LLM Reasoning}

\author{
Hongbin Zhang\textsuperscript{1,2} \quad
Chaozheng Wang\textsuperscript{3} \quad
Kehai Chen\textsuperscript{1,2} \quad
Youcheng Pan\textsuperscript{2} \\
\textbf{Yang Xiang}\textsuperscript{2} \quad
\textbf{Jinpeng Wang}\textsuperscript{3} \quad
\textbf{Min Zhang}\textsuperscript{1,2} \\
\\
\textsuperscript{1}Institute of Computing and Intelligence, Harbin Institute of Technology, Shenzhen, China \\
\textsuperscript{2}Peng Cheng Laboratory, Shenzhen, China \\
\textsuperscript{3}Keeta AI, Meituan, Beijing, China
}

\begin{document}

\maketitle

\begin{abstract}
On-policy self-distillation (OPSD) is an emerging LLM post-training paradigm in which the model serves as its own teacher: conditioned on privileged information such as a reference trace or hint, the same policy provides dense token-level supervision on its own rollouts.
However, recent studies show that OPSD degrades complex reasoning by suppressing predictive uncertainty, which supports exploration and hypothesis revision.
Our token-level analysis shows that this failure arises from applying a uniform direction of teacher supervision across tokens with different uncertainty levels: conformity to the privileged self-teacher suppresses exploration at high entropy, while deviation from the teacher degrades step accuracy at low entropy.
Accordingly, we propose \textbf{Direction-Adaptive Self-Distillation} (\textbf{DASD}), which reframes privileged self-distillation from uniform teacher imitation into entropy-routed directional supervision: high-entropy tokens are pushed away from the privileged teacher to preserve exploration, while low-entropy tokens are pulled toward the teacher to stabilize step-level execution.
Across six mathematical reasoning benchmarks, DASD achieves the best macro Avg@16 over strong RLVR and self-distillation baselines. Pass@$k$, reasoning-health, and generalization analyses
show that these average gains come from preserving exploration without sacrificing step-level execution.
\end{abstract}

\section{Introduction}
Knowledge distillation~\citep{hinton2015distilling,kim2016sequence,furlanello2018born,xu2024kdsurvey} and its self-distilling variants~\citep{snell2022learning,song2026survey} have recently emerged as efficient ways to improve reasoning in large language models (LLMs) during post-training with reinforcement learning from verifiable rewards (RLVR)~\citep{schulman2017ppo,christiano2017deeprl,ouyang2022instructgpt,shao2024deepseekmath,guo2025deepseekr1,yu2026dapo,zhang2025rlsurvey,liu2025rlllmsurvey,chen2025cotsurvey}. In on-policy self-distillation (OPSD), the model generates its own rollouts as a student, while a privileged conditioning of the same model---given information such as a reference trace, solution hint, or feedback---serves as a teacher that provides dense token-level guidance~\citep{zhao2026opsd,li2026srpo,hubotter2026sdpo,yang2026rlsd,shenfeld2026selfdistillation}. OPSD is attractive because it couples RLVR's outcome supervision with fine-grained teacher feedback, without requiring a separate external teacher~\citep{shao2024deepseekmath,zhao2026opsd,hubotter2026sdpo}.

Despite this appeal, prior work shows that OPSD can degrade reasoning on complex problems by suppressing uncertainty expressions during reasoning~\citep{kim2026whyopsd}. Privileged conditioning on reference information $r$ makes the self-teacher concentrate on a confident, solution-conditioned path; imitation then pushes the student toward the same low-uncertainty style. This suppresses markers such as ``wait'', which help an unprivileged solver explore alternatives, revise hypotheses, and recover from mistakes, thereby weakening the exploratory search and self-correction needed for difficult reasoning.

This diagnosis motivates a natural question: if conformity to the privileged teacher harms reasoning by suppressing uncertainty, can reversing the distillation direction restore it? Our sign-flip probe shows that it cannot. Deviation recovers epistemic markers, but corrupts step-level execution. This complementary failure indicates that teacher pressure is not uniformly beneficial or harmful; its effect depends on the token regime. Prefix interventions further isolate this mechanism: conformity damages high-entropy forking tokens, where exploration must be preserved, whereas deviation damages low-entropy scaffolding tokens, where stable execution must be maintained.

\begin{figure}[H]
  \centering
  \widegraphics[0.80\linewidth]{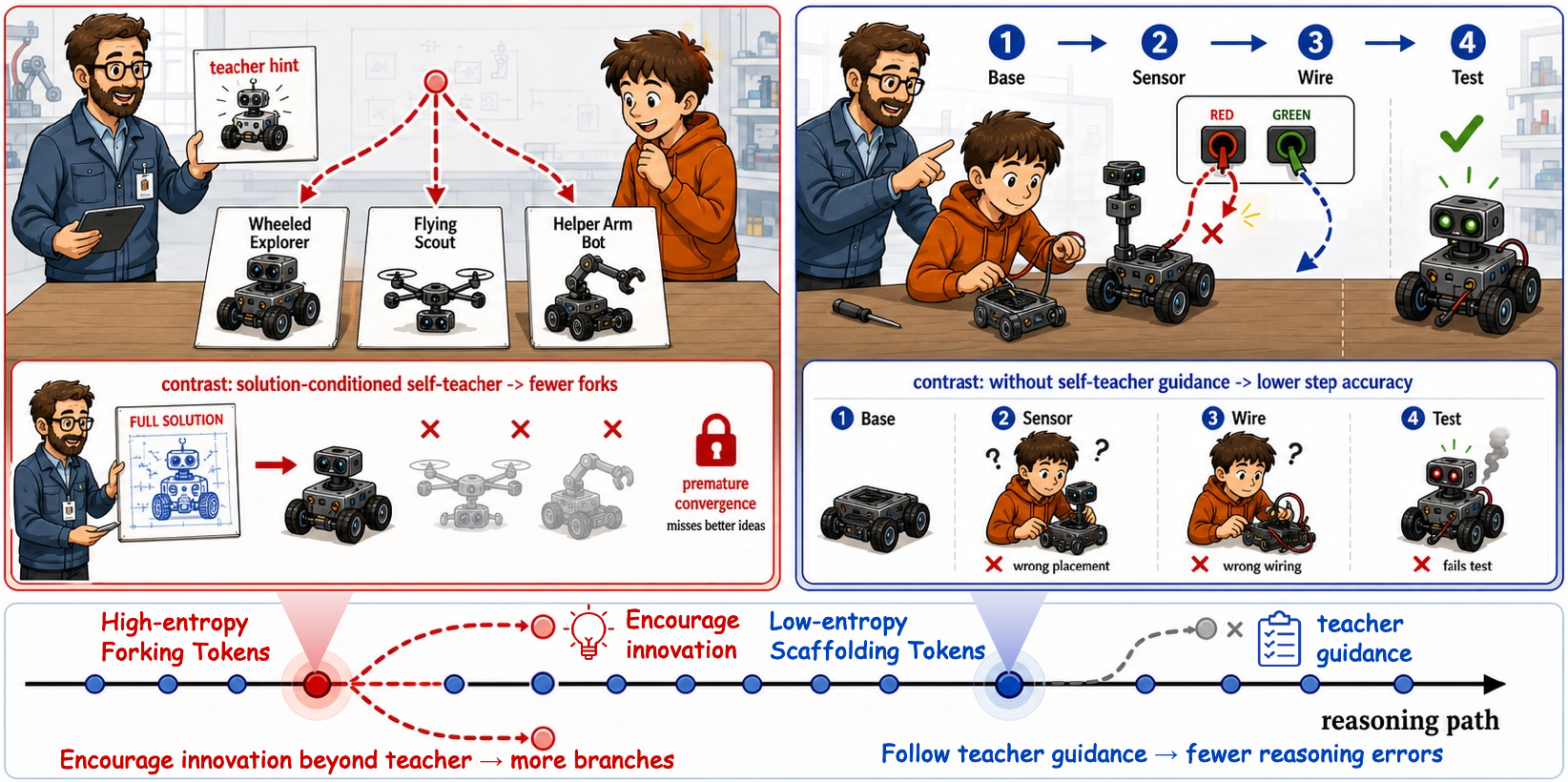}
  \widecaption{\textbf{Conceptual analogy for DASD.} Student entropy switches the role of the solution-conditioned self-teacher. High-entropy forking tokens should move away from the teacher to avoid premature convergence and preserve alternative reasoning paths, whereas low-entropy scaffolding tokens should follow the teacher to prevent routine execution errors.}
  \label{fig:concept-analogy}
\end{figure}

Building on this dissection, we turn the analysis into a concrete training prescription: \textbf{Direction-Adaptive Self-Distillation} (\textbf{DASD}).
As illustrated in Figure~\ref{fig:concept-analogy}, high-entropy forking tokens correspond to decision points where directly imitating the solution-conditioned teacher may prematurely collapse alternative reasoning paths. DASD therefore pushes these tokens away from the teacher to preserve exploration. By contrast, low-entropy scaffolding tokens correspond to routine execution steps, where teacher guidance can help prevent local errors. DASD therefore pulls these tokens toward the teacher to stabilize reasoning.
In this way, DASD reframes privileged self-distillation from uniform teacher imitation into entropy-routed directional supervision: the verifier remains the global outcome anchor, while the self-teacher provides a local token-level signal whose direction is adapted according to entropy.

Across six mathematical reasoning benchmarks, it attains the best macro Avg@16 over strong RLVR and self-distillation baselines at every scale, with the largest average gains on harder competition-style benchmarks (HMMT25, OlympiadBench, AIME24, AIME25) that require maintaining multiple plausible solution paths. Pass@$k$ scaling and reasoning-health diagnostics show why: DASD expands exploration while preserving step-level execution, matching the token-level mechanism above.

Our contributions are threefold: (i) we identify a token-level directional mismatch behind OPSD degradation, where teacher conformity suppresses high-entropy forking tokens and teacher deviation corrupts low-entropy scaffolding tokens; (ii) we propose Direction-Adaptive Self-Distillation (DASD), an entropy-routed objective that pushes high-entropy tokens away from the privileged self-teacher and pulls low-entropy tokens toward it; (iii) and we show across six mathematical reasoning benchmarks that DASD improves performance by preserving exploration while maintaining step-level execution.

\section{Preliminaries}\label{sec:prelim}

\noindent\textbf{Group Relative Policy Optimization.}
We consider the standard RLVR setting in which a policy $\pi_\theta$ generates a response $o=(o_1,\dots,o_T)$ for a question $x$, and a programmatic verifier returns a binary reward $R(x,o)\in\{0,1\}$~\citep{shao2024deepseekmath,guo2025deepseekr1}. Group Relative Policy Optimization (GRPO)~\citep{shao2024deepseekmath} samples $G$ trajectories from $\pi_{\theta_{\mathrm{old}}}(\cdot\mid x)$, assigns $A_G^{(i)}=(R(x,o^{(i)})-\mu_G)/\sigma_G$ using the group's reward mean $\mu_G$ and standard deviation $\sigma_G$, and optimizes the clipped surrogate below, where $\rho_t^{(i)}=\pi_\theta(o_t^{(i)}\mid x,o_{<t}^{(i)})/\pi_{\theta_{\mathrm{old}}}(o_t^{(i)}\mid x,o_{<t}^{(i)})$:
\begin{equation}
\mathcal{L}_{\mathrm{GRPO}}(\theta)
= \mathbb{E}\!\left[
\frac{1}{G}\sum_{i=1}^{G}\frac{1}{|o^{(i)}|}
\sum_{t=1}^{|o^{(i)}|}
\min\!\left(
\rho_t^{(i)} A_G^{(i)},
\clip(\rho_t^{(i)},1-\epsilon,1+\epsilon)A_G^{(i)}
\right)
\right].
\label{eq:grpo}
\end{equation}
GRPO's limitation is \emph{sparse credit assignment}: every token in a trajectory inherits the same scalar $A_G^{(i)}$, so decisive deductions, routine algebra, and stylistic filler are pushed in the same direction.

\noindent\textbf{On-policy distillation and self-distillation.}
On-policy distillation (OPD)~\citep{agarwal2024onpolicy} densifies supervision by matching the student's next-token distribution to a teacher distribution on the student's own rollouts. OPD normally requires a stronger external teacher; on-policy \emph{self}-distillation (OPSD)~\citep{zhao2026opsd,hubotter2026sdpo} instead uses the same policy as a teacher, specialized by \emph{\prv{privileged information $r$}} such as a verified trace, solution hint, or environment feedback:
\begin{equation}
\mathcal{L}_{\mathrm{OPSD}}(\theta)
= \mathbb{E}_{o\sim\pi_\theta(\cdot\mid x)}\!\left[
\frac{1}{|o|}\sum_{t=1}^{|o|}
\mathrm{KL}\!\left(
\pi_\theta(\cdot\mid x,o_{<t})
\,\middle\|\,
\mathrm{sg}\!\left[\prv{\pi_\theta(\cdot\mid x,r,o_{<t})}\right]
\right)
\right].
\label{eq:opsd-rkl}
\end{equation}
The reverse-KL geometry is mode-seeking: it concentrates the student on teacher-preferred modes, thereby providing dense token-level credit without an external model.

\noindent\textbf{Why OPSD can fail on reasoning.}
Despite this efficiency, \citet{kim2026whyopsd} show that OPSD can degrade mathematical reasoning. Privileged conditioning resolves the teacher's residual uncertainty, so its outputs contain fewer epistemic markers such as ``wait'', ``hmm'', and ``alternatively''---tokens that help an unprivileged solver branch, revise, and recover. The student can therefore distill a \emph{solution-conditioned style}: it learns to generate with confidence that depends on $r$, even though $r$ is unavailable at inference time. This train--inference mismatch motivates the next section.

\section{A Token-Level Anatomy of OPSD's Reasoning Failure}\label{sec:prelim-analysis}

\refstepcounter{subsection}\label{sec:rq1}\noindent\textbf{Probe 1: Does reversing self-distillation direction restore reasoning?}
\noindent If imitating a privileged teacher harms reasoning by erasing uncertainty, a tempting repair is to reverse the direction and push the student away from that teacher. We treat this sign flip as a diagnostic probe rather than a candidate method. To isolate teacher-pressure direction from full-distribution privileged-information leakage~\citep{yang2026rlsd}, we use the sampled log-evidence gap:
$A_t \;=\; \prv{\log\pi_\theta(o_t\mid x, r, o_{<t})} \;-\; \log\pi_\theta(o_t\mid x, o_{<t}),$
and optimize $\max_\theta \mathbb{E}_{o\sim\pi_\theta}\!\big[\sum_t s_t A_t\big]$ with a global sign $s_t\in\{+1,-1\}$. \textbf{Conformity} ($s_t\!\equiv\!+1$) is the OPSD-like direction: it pulls the student toward the privileged teacher. \textbf{Novelty} ($s_t\!\equiv\!-1$) is the mirror direction: it pushes the student away. All probes use DAPO-Math-17k with Qwen3-1.7B and a GRPO backbone; on AIME24 rollouts, we track global reasoning ($\mathrm{pass@16}$, $\mathrm{acc@16}$), rigorous execution ($\mathrm{StepAcc}$ via ProcessBench~\citep{zheng2024processbench}), and flexible exploration ($E(y)$ epistemic-marker density, with response length as support).

\begin{figure}[H]
\centering
\widegraphics[0.86\linewidth]{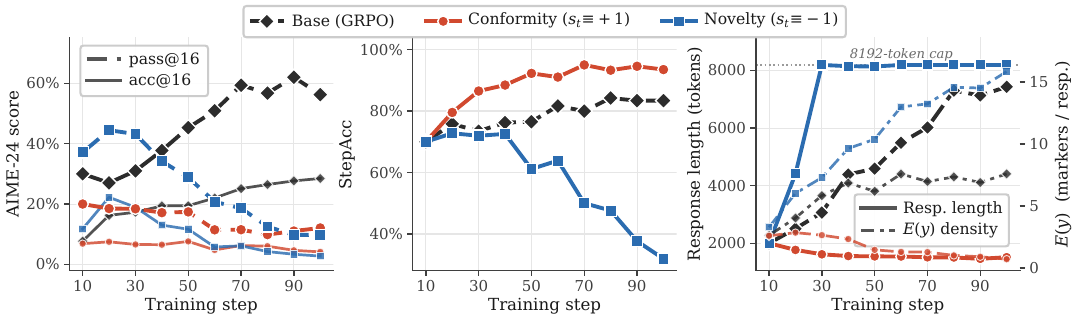}
\widecaption{Training trajectories of \emph{Base} (GRPO), \emph{Conformity} ($s_t\!\equiv\!{+}1$), and \emph{Novelty} ($s_t\!\equiv\!{-}1$) on AIME-24. (a)~Global reasoning ($\mathrm{pass@16}$, $\mathrm{acc@16}$). (b)~Rigorous execution ($\mathrm{StepAcc}$). (c)~Exploration ($E(y)$ density, response length). Conformity suppresses exploration without gaining accuracy; Novelty briefly overshoots Base then collapses into verbosity without reasoning.}
\label{fig:rq1-trajectories}
\end{figure}

\textbf{Finding 1: both uniform signs fail, but they fail on different axes.}
Figure~\ref{fig:rq1-trajectories}(a) shows that neither sign matches \emph{Base}: Conformity quickly loses $\mathrm{pass@16}$ and never improves $\mathrm{acc@16}$, while Novelty briefly rises above Base before collapsing. The component metrics explain why. Conformity leaves execution roughly intact but cuts response length and sharply suppresses $E(y)$, indicating lost exploration. Novelty does the opposite: $E(y)$ and length keep growing even after accuracy crashes, while $\mathrm{StepAcc}$ deteriorates. Thus the mirror hypothesis fails. Reasoning health is not a scalar controlled by one global sign; it requires both stable execution and flexible exploration.

\begin{figure}[H]
  \centering
  \includegraphics[width=0.66\linewidth]{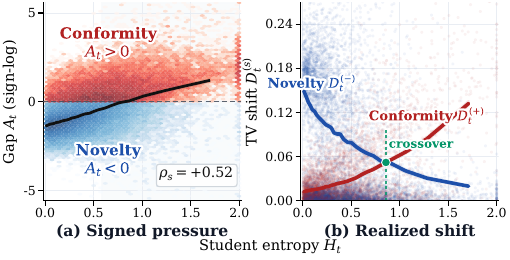}
  \caption{Token-level OPSD pressure vs.\ student entropy $H_t$. \textbf{(a)}~Log-evidence gap $A_t$ ($A_t{>}0$: Conformity;  $A_t{<}0$: Novelty), Spearman $\rho_s{=}{+}0.52$. \textbf{(b)}~TV shift $D_t^{(s)}$: Conformity displaces hardest at high-$H_t$ forks; Novelty at low-$H_t$ scaffolding.}
  \label{fig:rq2-unified}
\end{figure}

\refstepcounter{subsection}\label{sec:rq2}\noindent\textbf{Probe 2: Which token property separates the two failure modes?}
\noindent The complementary failures in Probe 1 (\S\ref{sec:rq1}) imply that Conformity and Novelty act most strongly on different token populations. Because privileged conditioning mainly removes residual uncertainty~\citep{kim2026whyopsd}, the student's predictive entropy $H_t=\mathcal{H}[\pi_\theta(\cdot\mid x,o_{<t})]$ is the natural candidate separator. We measure this relationship in two ways: $A_t$ captures the intended signed teacher pressure, while a one-step counterfactual TV shift captures the realized distributional displacement,
$D_t^{(s)} = \mathrm{TV}\!\big(\pi_\theta(\cdot\mid x,o_{<t}),\;\pi_\theta^{(s\text{-step})}(\cdot\mid x,o_{<t})\big),$
where $\pi_\theta^{(s\text{-step})}$ is the student after one gradient step with sign $s\in\{+1,-1\}$.

\textbf{Finding 2: the two signs concentrate pressure at opposite entropy regimes.}
Figure~\ref{fig:rq2-unified}(a) shows that strong Conformity pressure concentrates at high-$H_t$ tokens, while strong Novelty pressure concentrates at low-$H_t$ tokens; the signed gap has a positive Spearman correlation with entropy ($\rho={+}0.52$).
Figure~\ref{fig:rq2-unified}(b) shows the same split in realized movement: Conformity shifts the student most at high-$H_t$ forks, whereas Novelty shifts it most at low-$H_t$ scaffolding. This aligns with prior entropy analyses of reasoning: high-entropy forking tokens carry much of RLVR's search benefit, while low-entropy tokens execute deterministic scaffolding~\citep{wang2025highentropy,jin2026entropy}. Entropy therefore connects the two failures mechanistically: uniform attraction attacks the forks that should stay exploratory, while uniform repulsion attacks the scaffolding that should stay stable. Since $H_t$ is computed from the student alone, it is also a feasible routing variable.

\begin{table}[H]
  \centering
  \begingroup
  \small
  \setlength{\tabcolsep}{7pt}
  \renewcommand{\arraystretch}{1.05}
  \setlength{\aboverulesep}{0.35ex}
  \setlength{\belowrulesep}{0.45ex}
  \caption{\textbf{Prefix interventions} (\% change vs. unmodified rollouts). Green/red highlight the predicted target gains/failures; gray rows are entropy-agnostic controls.}
  \label{tab:rq3-direction-adaptive}
  \begin{tabular}{@{}>{\raggedright\arraybackslash}p{0.42\linewidth}rr@{}}
  \toprule
  \rowcolor{black!4}
  \textbf{Bucket / metric} & \textbf{Conformity} & \textbf{Novelty} \\
  \midrule
  \multicolumn{3}{@{}l}{\textbf{Low-}$\boldsymbol{H_t}$ \textbf{scaffolding}} \\
  $\Delta$StepAcc & \cellcolor{OliveGreen!22}\textcolor{OliveGreen!75!black}{\textbf{+18.3\%}} & \cellcolor{BrickRed!22}\textcolor{BrickRed!80!black}{\textbf{$-79.4\%$}} \\
  $\Delta E(y)$ & \textcolor{BrickRed}{$-4.21\%$} & \textcolor{OliveGreen!75!black}{$+5.03\%$} \\
  \addlinespace[1pt]
  \midrule
  \multicolumn{3}{@{}l}{\textbf{High-}$\boldsymbol{H_t}$ \textbf{forks}} \\
  $\Delta$StepAcc & \textcolor{BrickRed}{$-2.17\%$} & \textcolor{BrickRed}{$-6.18\%$} \\
  $\Delta E(y)$ & \cellcolor{BrickRed!22}\textcolor{BrickRed!80!black}{\textbf{$-86.2\%$}} & \cellcolor{OliveGreen!22}\textcolor{OliveGreen!75!black}{\textbf{+61.3\%}} \\
  \addlinespace[1pt]
  \midrule
  \rowcolor{black!7}\multicolumn{3}{@{}l}{\textbf{Random-}$\boldsymbol{H_t}$ \textbf{control}} \\
  \rowcolor{black!7}$\Delta$StepAcc & $+1.84\%$ & $-2.09\%$ \\
  \rowcolor{black!7}$\Delta E(y)$ & $-3.56\%$ & $+4.47\%$ \\
  \bottomrule
  \end{tabular}
  \endgroup
\end{table}

\refstepcounter{subsection}\label{sec:rq3}\noindent\textbf{Probe 3: Does entropy routing causally repair the mismatch?}
\noindent The entropy split suggests a diagonal repair: use Conformity on low-$H_t$ scaffolding and Novelty on high-$H_t$ forks. To test this directly, we perform a prefix-intervention probe at inference time, with no gradient update. At each position $t$, we bucket the token by student entropy ($H_t<\tau_{0.5}$ or $H_t\geq\tau_{0.5}$), where $\tau_{0.5}$ is the trajectory-local median entropy. Within the chosen bucket, we sample $o_t$ from one of two interventional distributions: Conformity substitutes the privileged teacher, $\tilde{\pi}_t^{\mathrm{C}}=\pi_\theta(\cdot\mid x,r,o_{<t})$, while Novelty softly suppresses teacher-aligned mass,
$\tilde{\pi}_t^{\mathrm{N}}(v) \propto \pi_\theta(v\mid x,o_{<t})\,/\,\pi_\theta(v\mid x,r,o_{<t})^{\alpha}, \alpha=0.5.$
After the intervened token, generation resumes under the unprivileged student. A Random-$H_t$ control applies the same interventions to entropy-agnostic token samples, isolating entropy from token frequency or position. We report changes in $\mathrm{StepAcc}$ and $E(y)$ relative to unmodified AIME24 rollouts.

\textbf{Finding 3: correct routing helps, while misrouting reproduces the OPSD failure.}
Table~\ref{tab:rq3-direction-adaptive} shows the desired diagonal pattern. On low-$H_t$ scaffolding, Conformity improves step accuracy by $18.3\%$ with only a small expected side-cost in exploration. On high-$H_t$ forks, Novelty increases epistemic-marker density by $61.3\%$ with only a modest step-accuracy cost. The off-diagonal cells show the complementary harms predicted by Probe 2 (\S\ref{sec:rq2}): Conformity on high-$H_t$ tokens collapses $E(y)$ by $86.2\%$, while Novelty on low-$H_t$ tokens drops $\mathrm{StepAcc}$ by $79.4\%$. The Random-$H_t$ control keeps all effects near zero ($|\Delta|\leq5\%$), ruling out token frequency and position as main explanations.

Together, the probes establish the design rule used next. OPSD fails because it applies one teacher-pressure direction to both entropy regimes. The repair is to route the same privileged self-teacher signal diagonally: attractive on low-$H_t$ scaffolding, repulsive on high-$H_t$ forks. Crucially, the router uses only the student's own entropy, so the method can exploit privileged guidance during training without requiring the privileged context at inference time. \emph{Detailed diagnostic text is provided in Appendix~\ref{app:sec3_detailed}.}\par\smallskip

\section{Direction-Adaptive Self-Distillation}\label{sec:method}

This section turns the token-level anatomy above into \textbf{Direction-Adaptive Self-Distillation} (\textbf{DASD}). The guiding principle is to keep the privileged self-teacher as a dense local source of evidence, while routing its pressure by the student's uncertainty instead of applying it as uniform imitation.

\noindent\textbf{Self-distillation objective with adaptive direction.}
The preceding probes leave a sharper requirement than reweighting: the same self-teacher signal must attract low-entropy scaffolding and repel high-entropy forks. We begin with a signed KL primitive that makes this direction explicit. At the prefix $(x,o_{<t}^{(i)})$, the inference-time student is $\pit{i}$, and the training-time privileged self-teacher is $\mathrm{sg}[\pitr{i}]$, where $\mathrm{sg}[\cdot]$ freezes the privileged branch during the update. Ordinary OPSD minimizes this KL and therefore makes the self-teacher uniformly attractive; DASD keeps the same local comparison but multiplies it by a signed coefficient:
\begin{equation}
\begin{aligned}
\widetilde{\mathcal{D}}_t^{(i)}(\theta)
&=\omega_t^{(i)}
\mathrm{KL}\!\left(
\pit{i}
\,\middle\|\,
\mathrm{sg}\!\left[\pitr{i}\right]
\right),
\qquad \omega_t^{(i)}\in[-1,1].
\end{aligned}
\label{eq:dasd-coupling}
\end{equation}
A positive $\omega_t^{(i)}$ preserves standard self-distillation: minimizing the loss reduces the KL and pulls the student toward the privileged self-teacher. A negative $\omega_t^{(i)}$ reverses this local pressure, making the self-teacher repulsive where imitation would prematurely collapse alternative continuations.

\noindent\textbf{Entropy-routed direction and gap-gated magnitude.}
DASD computes $\omega_t$ from rollout-measured quantities. The entropy term chooses the teacher-pressure direction, and the emitted-token log-evidence gap decides how strongly that directional signal should enter the update. Define
$\delta_t^{(i)}=\prv{\log\pi_\theta(o_t^{(i)}\mid x,r,o_{<t}^{(i)})}-\log\pi_\theta(o_t^{(i)}\mid x,o_{<t}^{(i)})$
as the privileged self-teacher's log-probability advantage on the sampled token. Let $\tau_\rho^{(i)}$ be the $\rho$-th percentile of the trajectory entropies, let $\hat\sigma_H^{(i)}=\mathrm{MeanAbsDev}_t(\{H_t^{(i)}\})$ be their mean absolute deviation, and let $\tilde\delta^{(i)}=\operatorname{median}_t|\delta_t^{(i)}|$. With $\bar\delta_t^{(i)}=\delta_t^{(i)}/(\tilde\delta^{(i)}+\epsilon)$ and logistic sigmoid $\sigma(\cdot)$, DASD sets:
\begin{equation}
\omega_t^{(i)}=
\underbrace{\tanh\!\left(\tfrac{\tau_\rho^{(i)}-H_t^{(i)}}{\hat\sigma_H^{(i)}+\epsilon}\right)}_{\text{\scriptsize entropy router}}
\cdot
\underbrace{\sigma\!\left(|\bar\delta_t^{(i)}|-1\right)}_{\text{\scriptsize gap gate}},
\quad
\tau_\rho^{(i)}=\operatorname{Quantile}_{\rho}\!\left(\{H_t^{(i)}\}_t\right).
\label{eq:zeta}
\end{equation}
The first factor is positive below the trajectory percentile and negative above it, matching the diagonal rule in Table~\ref{tab:rq3-direction-adaptive}. Rather than imposing a hard boundary, the $\tanh$ router smoothly changes the self-teacher from attractive on low-$H_t$ scaffolding to repulsive on high-$H_t$ forks. The sigmoid gate then attenuates teacher--student fluctuations smaller than the trajectory's own gap scale, so only reliable disagreements shape the signed token pressure.

\noindent\textbf{Verifier-anchored directional objective.}
The signed teacher field specifies where local pressure should point, but it does not decide whether a full reasoning path is correct. DASD therefore keeps the verifier as the global anchor. With $A_G^{(i)}=(R(x,o^{(i)})-\mu_G)/\sigma_G$ denoting the per-prompt group-relative verifier advantage over the $G$ rollouts for prompt $x$, the distribution-level loss is
\begin{equation}
\resizebox{.945\linewidth}{!}{$
\ell_{\mathrm{DASD}}(\theta)
= -\mathbb{E}\!\left[
\frac{1}{G}\sum_{i=1}^{G}\frac{1}{|o^{(i)}|}
\sum_{t=1}^{|o^{(i)}|}
\min\!\left(
\rho_t^{(i)} A_G^{(i)},
\clip(\rho_t^{(i)},1-\epsilon,1+\epsilon)A_G^{(i)}
\right)
\right]
+
\beta\,\mathbb{E}\!\left[
\frac{1}{G}\sum_{i=1}^{G}\frac{1}{|o^{(i)}|}\sum_{t=1}^{|o^{(i)}|}
\widetilde{\mathcal{D}}_t^{(i)}(\theta)
\right].
$}
\label{eq:dasd-functional}
\end{equation}
The first term supplies trajectory-level correctness; the second distributes signed token-level pressure inside verifier-scored rollouts. The privileged branch therefore refines credit assignment without replacing the verifier or being used at inference time.

\smallskip
\noindent\textbf{Sampled realization for PPO training.}
Equation~\eqref{eq:dasd-functional} states the desired local geometry; PPO training requires a sampled estimator. DASD uses the emitted-token log-evidence gap rather than an unrestricted full-vocabulary imitation target, keeping privileged information in a controlled token-level correction.

\begin{proposition}[Sampled realization of the signed reference field]\label{prop:dasd-credit}
Fix $x,o_{<t},r$ and a sampled token $o_t$. Let $p_t=\pi_\theta(\cdot\mid x,o_{<t})$, let $q_t=\mathrm{sg}[\pi_\theta(\cdot\mid x,r,o_{<t})]$, and define $\delta_t=\log q_t(o_t)-\log p_t(o_t)$. Treating $q_t$, $A_G$, and the measured routing coefficient $\omega_t$ as fixed during the current update,
\[
\ell_t(\theta)=-A_G\log p_t(o_t)+\beta\,\omega_t\,\mathrm{KL}(p_t\|q_t),
\quad
\nabla_\theta\ell_t
\stackrel{\mathrm{MC}}{\doteq}
-\big(A_G+\beta\,\omega_t\delta_t\big)\nabla_\theta\log p_t(o_t).
\]
Here $\doteq$ denotes Monte Carlo score-function form after dropping action-independent baseline terms.
\end{proposition}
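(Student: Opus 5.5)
The plan is to split $\ell_t$ into its two terms and differentiate each separately, treating $q_t$, $A_G$ and $\omega_t$ as constants as the statement allows. The first term is immediate: $\nabla_\theta[-A_G\log p_t(o_t)] = -A_G\nabla_\theta\log p_t(o_t)$, already in score-function form at the sampled token. All the work lies in the KL term, where I will apply the standard reverse-KL score-function identity and then replace the expectation by its one-sample estimate at $o_t\sim p_t$.

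For the KL term, write $\mathrm{KL}(p_t\|q_t)=\sum_v p_t(v)\,[\log p_t(v)-\log q_t(v)]$. Differentiating with $q_t$ frozen gives
\[
\nabla_\theta\mathrm{KL}=\sum_v \nabla_\theta p_t(v)\,[\log p_t(v)-\log q_t(v)] + \sum_v p_t(v)\,\nabla_\theta\log p_t(v).
\]
The second sum equals $\nabla_\theta\sum_v p_t(v)=0$. This is the first action-independent term that gets dropped, and here it vanishes exactly. Using $\nabla p = p\,\nabla\log p$, the first sum becomes
\[
\mathbb{E}_{v\sim p_t}\big[(\log p_t(v)-\log q_t(v))\nabla_\theta\log p_t(v)\big] = -\,\mathbb{E}_{v\sim p_t}\big[\delta_t(v)\,\nabla_\theta\log p_t(v)\big],
\]
where $\delta_t(v)=\log q_t(v)-\log p_t(v)$. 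Two points need care here. First, $\delta_t$ itself depends on $\theta$ through $\log p_t$. Its derivative, however, enters only via $\sum_v p_t\nabla\log p_t=0$, which is exactly the cancellation above. So in the score-function form $\delta_t$ can be treated as a fixed per-action weight, i.e.\ under a stop-gradient. Second, any constant baseline $b$ may be added to $\delta_t(v)$ without changing the expectation, because $\mathbb{E}_{p_t}[\nabla\log p_t]=0$. This is the sense of ``dropping action-independent baseline terms''.

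The final step is the Monte Carlo estimate. Since $o_t$ is drawn from $p_t$ on-policy (with PPO's importance ratio close to one at the start of the update, which I would state as the standard identification), the single-sample estimator of the expectation is $-\delta_t\nabla_\theta\log p_t(o_t)$. Multiplying by the fixed scalar $\beta\omega_t$ and adding the GRPO term gives $-(A_G+\beta\omega_t\delta_t)\nabla_\theta\log p_t(o_t)$, which is the claimed form. The same equality holds for the first term if we view it as the on-policy estimate of $-A_G\nabla\mathbb{E}\log p$, so both terms fit the same $\doteq$ convention.

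I expect the main obstacle to be making $\doteq$ precise rather than any calculation. The equality holds only in expectation over $o_t\sim p_t$, and only up to zero-mean baseline terms. The step I would state most carefully is why $\theta$-dependence inside $\delta_t$ contributes nothing; I would handle it with the explicit identity $\sum_v p_t(v)\nabla\log p_t(v)=0$ shown above.
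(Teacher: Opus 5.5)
Your proposal is correct and follows the paper's proof step for step: you split off the verifier term, differentiate $\mathrm{KL}(p_t\|q_t)$ with $q_t$ frozen, discard the zero-mean term $\sum_v p_t(v)\nabla_\theta\log p_t(v)$ (the paper's constant $1$), and replace $\mathbb{E}_{v\sim p_t}[\cdot]$ by its one-sample estimate at $o_t$. One closing aside is wrong but harmless: $-A_G\nabla_\theta\log p_t(o_t)$ is not an estimate of $-A_G\nabla_\theta\mathbb{E}_{p_t}[\log p_t]$, whose score-function form is $-A_G\,\mathbb{E}_{p_t}[\log p_t\,\nabla_\theta\log p_t]$, so drop that reading and rely on your earlier observation that the first term's gradient is exact.
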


\noindent Appendix~\ref{app:derivation} gives the proof. Proposition~\ref{prop:dasd-credit} shows that the distributional signed field reduces, on sampled tokens, to an additive credit term: the verifier contributes the trajectory anchor $A_G$, while the privileged branch contributes only the directional correction $\beta\omega_t\delta_t$. For numerical stability, DASD normalizes this correction within each trajectory and uses
$\hat A_t^{(i)}=A_G^{(i)}+\beta\omega_t^{(i)}\bar\delta_t^{(i)}$.
The final update maximizes the standard clipped PPO surrogate with this direction-adaptive advantage. Writing $\ell_{\mathrm{clip}}(\rho,a)=\min(\rho a,\clip(\rho,1-\epsilon_{\mathrm{clip}},1+\epsilon_{\mathrm{clip}})a)$, DASD averages $\ell_{\mathrm{clip}}(\rho_t^{(i)},\hat A_t^{(i)})$ over sampled tokens and trajectories; Algorithm~\ref{alg:dasd} (Appendix~\ref{app:algorithm}) summarizes the procedure. The limiting cases make the geometry transparent: $\beta=0$ recovers verifier-only learning, $\omega_t\equiv+1$ recovers uniform attraction, $\omega_t\equiv-1$ recovers uniform repulsion, and DASD selects the diagonal implied by the probes---attractive where the path is settled, repulsive where premature certainty would erase search.

\section{Experiments}\label{sec:experiments}

\begin{table}[H]
  \centering
  \begingroup
  \small
  \setlength{\tabcolsep}{8.5pt}
  \renewcommand{\arraystretch}{1.03}
  \setlength{\aboverulesep}{0.25ex}
  \setlength{\belowrulesep}{0.35ex}
  \caption{Main results (Avg@16 accuracy, \%; higher is better) across six reasoning benchmarks and three model scales;
    \textbf{bold}/\underline{underline} mark best/second-best within each scale.
    Full results in Table~\ref{tab:full_results}.}
  \label{tab:main_results}
  \widetablebox{%
  \begin{tabular}{@{}lccccccc@{}}
  \toprule
  \rowcolor{black!3}
  \textbf{Method}
    & \multicolumn{1}{c}{\textbf{MATH500}}
    & \multicolumn{1}{c}{\textbf{Minerva}}
    & \multicolumn{1}{c}{\textbf{HMMT25}}
    & \multicolumn{1}{c}{\textbf{Olympiad}}
    & \multicolumn{1}{c}{\textbf{AIME24}}
    & \multicolumn{1}{c}{\textbf{AIME25}}
    & \multicolumn{1}{c}{\textbf{Avg}} \\
  \midrule
  \textit{Qwen3-1.7B}                                        & 58.4 & 17.5 & 7.9 & 33.4 & 13.1 & 10.6 & 23.5 \\
  \quad +GRPO~\citep{shao2024deepseekmath}                   & 68.2 & 22.6 & 11.5 & 40.8 & 32.5 & 27.5 & 33.8 \\
  \quad +HEPO~\citep{wang2025highentropy}                    & 68.4 & 22.8 & \underline{15.6} & \underline{42.8} & 32.5 & \underline{29.2} & \underline{35.2} \\
  \quad +OPSD~\citep{zhao2026opsd}                           & 59.2 & 17.5 & 8.5 & 32.5 & 15.0 & 9.2 & 23.6 \\
  \quad +SDPO~\citep{hubotter2026sdpo}                       & 58.8 & 17.8 & 8.1 & 32.8 & 14.2 & 9.8 & 23.6 \\
  \quad +RLSD~\citep{yang2026rlsd}                           & \underline{68.7} & \underline{25.1} & 15.2 & 39.6 & \textbf{35.4} & 26.7 & 35.1 \\
  \quad +SRPO~\citep{li2026srpo}                             & 62.8 & 20.1 & 10.2 & 36.6 & 23.3 & 17.1 & 28.4 \\
  \rowcolor{bsdcol!8}
  \quad \textbf{+DASD (ours)}                                & \textbf{70.4} & \textbf{25.3} & \textbf{17.7} & \textbf{45.4} & \underline{35.0} & \textbf{32.1} & \textbf{37.7} \\
  \midrule
  \textit{Qwen3-4B}                                          & 67.0 & 24.6 & 15.2 & 43.1 & 23.1 & 18.5 & 31.9 \\
  \quad +GRPO~\citep{shao2024deepseekmath}                   & 72.6 & 28.2 & \underline{22.1} & 47.0 & 55.0 & \underline{49.8} & \underline{45.8} \\
  \quad +HEPO~\citep{wang2025highentropy}                    & 72.7 & 27.1 & 21.2 & \underline{50.4} & 53.3 & 45.2 & 45.0 \\
  \quad +OPSD~\citep{zhao2026opsd}                           & 66.4 & 24.8 & 12.9 & 43.2 & 24.2 & 20.0 & 31.9 \\
  \quad +SDPO~\citep{hubotter2026sdpo}                       & 66.9 & 24.4 & 13.3 & 43.0 & 25.0 & 19.4 & 32.0 \\
  \quad +RLSD~\citep{yang2026rlsd}                           & \underline{73.0} & \underline{28.5} & 20.4 & 44.9 & \underline{56.5} & 47.7 & 45.2 \\
  \quad +SRPO~\citep{li2026srpo}                             & 69.6 & 26.1 & 17.7 & 45.0 & 40.0 & 34.4 & 38.8 \\
  \rowcolor{bsdcol!8}
  \quad \textbf{+DASD (ours)}                                & \textbf{75.0} & \textbf{31.0} & \textbf{25.0} & \textbf{57.5} & \textbf{58.3} & \textbf{50.0} & \textbf{49.5} \\
  \midrule
  \textit{Qwen3-8B}                                          & 67.2 & 24.7 & 13.1 & 44.3 & 27.9 & 21.0 & 33.0 \\
  \quad +GRPO~\citep{shao2024deepseekmath}                   & \underline{73.0} & 27.7 & \underline{20.8} & 49.4 & 59.4 & 48.8 & 46.5 \\
  \quad +HEPO~\citep{wang2025highentropy}                    & \underline{73.0} & 28.7 & 20.4 & \underline{50.3} & \underline{66.7} & 49.0 & \underline{48.0} \\
  \quad +OPSD~\citep{zhao2026opsd}                           & 67.5 & 23.8 & 15.0 & 43.6 & 27.5 & 19.6 & 32.8 \\
  \quad +SDPO~\citep{hubotter2026sdpo}                       & 67.0 & 24.1 & 14.4 & 43.9 & 27.9 & 20.4 & 33.0 \\
  \quad +RLSD~\citep{yang2026rlsd}                           & \underline{73.0} & \underline{29.7} & \underline{20.8} & 49.4 & 62.3 & \underline{49.8} & 47.5 \\
  \quad +SRPO~\citep{li2026srpo}                             & 70.0 & 25.9 & 17.5 & 46.8 & 43.8 & 34.6 & 39.7 \\
  \rowcolor{bsdcol!8}
  \quad \textbf{+DASD (ours)}                                & \textbf{76.0} & \textbf{33.0} & \textbf{25.0} & \textbf{55.8} & \textbf{71.7} & \textbf{53.3} & \textbf{52.5} \\
\bottomrule
  \end{tabular}}
  \endgroup
\end{table}

The experiments evaluate DASD as a reasoning post-training method. We first measure direct mathematical accuracy under a shared RLVR/self-distillation protocol, then test whether improvements persist under larger sampling budgets. We next check whether the resulting policy preserves both ingredients of healthy reasoning: rigorous step execution and flexible self-correction. Finally, we evaluate DASD on non-math domains and additional model families.

\noindent\textbf{Models and Training Datasets.}
We use Qwen3-1.7B~\citep{yang2025qwen3} for detailed diagnostics and additionally report Qwen3-4B and Qwen3-8B in the main benchmark table to test scale consistency.
For self-distillation, the student and privileged self-teacher share weights; the teacher is the same model conditioned on a verified training trace $r$, while the student samples without $r$.
All methods are trained on DAPO-Math-17k~\citep{yu2026dapo}, using its verifiable answers for privileged-context and rewards.

\noindent\textbf{Evaluation Benchmarks and Metrics.}
We evaluate on MATH500~\citep{hendrycks2021math}, Minerva~\citep{lewkowycz2022solving}, HMMT25, OlympiadBench~\citep{he2024olympiadbench}, AIME24, and AIME25, spanning standard math benchmarks through hard competition-style reasoning problems.
Unless otherwise stated, we sample $K=16$ rollouts per question with temperature 1.0, top-$p=0.9$, and an 8{,}192-token limit, and report Avg@16 and Pass@16.
For AIME, we also report Pass@$k$ for $k\in\{1,4,8,16,32,64,128\}$ to test whether gains reflect broader coverage under larger sampling budgets.
As a compact diagnostic of reasoning quality, we measure rigorous execution (StepAcc, FES, CSR) and flexible exploration ($E(y)$ density, RevRate, Dist-3) on AIME24 rollouts at convergence; definitions are given in Appendix~\ref{app:exp_details}.

\noindent\textbf{Baselines and Implementation.}
We compare against GRPO~\citep{shao2024deepseekmath}, HEPO~\citep{wang2025highentropy}, OPSD~\citep{zhao2026opsd}, SDPO~\citep{hubotter2026sdpo}, RLSD~\citep{yang2026rlsd}, and SRPO~\citep{li2026srpo}; Appendix~\ref{app:exp_details} summarizes what each baseline changes relative to RLVR or self-distillation.
All methods use the \texttt{verl}~\citep{sheng2024hybridflow} framework with batch size $B=128$, mini-batch size $B_{\mathrm{mini}}=32$, learning rate $3\times10^{-6}$, and cosine decay. All methods use $G=8$ rollouts per prompt with per-prompt group-relative verifier advantages.
DASD uses router quantile $\rho=0.20$ by default, so $\tau_{\rho}^{(i)}$ is recomputed as the trajectory entropy quantile for each rollout; The privileged context $r$ is formed by prepending the verified ground-truth solution to the prompt.

\textbf{Performance on Mathematical Reasoning.}
Table~\ref{tab:main_results} shows that DASD achieves the best macro Avg@16 at all three model scales.
The strongest margins appear mainly on harder competition-style benchmarks---HMMT25, OlympiadBench, AIME24, and AIME25---where successful reasoning requires search over multiple plausible solution paths.
By contrast, OPSD and SDPO are often close to the base model on average, despite using privileged self-teacher signals---indicating that dense self-distillation alone is not sufficient.
Additional results are in Appendix~\ref{app:full_results}.

\begin{figure}[H]
  \centering
  \widegraphics[0.87\linewidth]{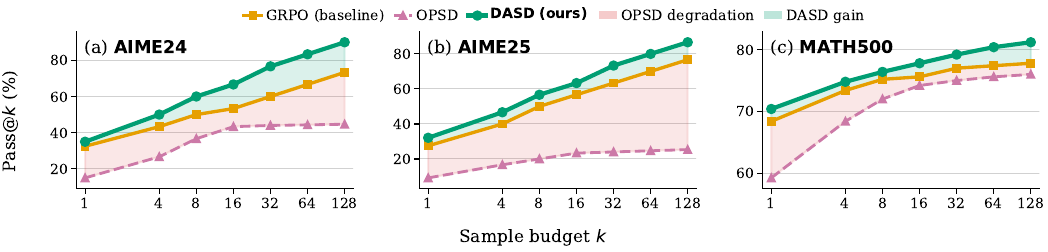}
  \widecaption{Pass@$k$ performance on AIME24, AIME25, and MATH500. DASD stays above GRPO and OPSD across all sample budgets, with the separation growing at larger $k$---indicating that DASD produces a more diverse set of reasoning trajectories.}
  \label{fig:pass_main}
\end{figure}

\begin{table}[H]
  \centering
  \begingroup
  \small
  \setlength{\tabcolsep}{8pt}
  \renewcommand{\arraystretch}{1.05}
  \setlength{\aboverulesep}{0.35ex}
  \setlength{\belowrulesep}{0.45ex}
  \caption{\textbf{Reasoning health} at AIME24 convergence (Qwen3-1.7B). Higher is better; DASD is shaded, and red marks OPSD's exploration collapse.}
  \label{tab:epistemic_recovery}
  \begin{tabular}{@{}>{\raggedright\arraybackslash}p{0.39\linewidth}rr>{\columncolor{bsdcol!8}}r@{}}
  \toprule
  \rowcolor{black!4}
  \textbf{Metric} & \textbf{GRPO} & \textbf{OPSD} & \textbf{DASD} \\
  \midrule
  \multicolumn{4}{@{}l}{\textbf{Execution quality}} \\
  Step accuracy $\uparrow$ & 58.4 & \underline{59.1} & \textbf{63.7} \\
  First-error step $\uparrow$ & 4.2 & \underline{4.4} & \textbf{5.1} \\
  Correct-step ratio $\uparrow$ & 61.3 & \underline{62.0} & \textbf{66.8} \\
  \addlinespace[1pt]
  \midrule
  \multicolumn{4}{@{}l}{\textbf{Exploration quality}} \\
  $E(y)$ density $\uparrow$ & 3.8 & \textcolor{BrickRed}{0.7} & \textbf{4.3} \\
  Revision rate $\uparrow$ & 0.41 & \textcolor{BrickRed}{0.06} & \textbf{0.47} \\
  Distinct-3 $\uparrow$ & 0.72 & \textcolor{BrickRed}{0.51} & \textbf{0.81} \\
  \addlinespace[1pt]
  \midrule
  \multicolumn{4}{@{}l}{\textbf{Outcome anchor}} \\
  Avg@16 $\uparrow$ & 32.5 & 15.0 & \textbf{35.0} \\
  \bottomrule
  \end{tabular}
  \endgroup
\end{table}

\textbf{Pass@$k$ Scaling.}
Fixed-budget accuracy does not show whether a method merely improves its first attempt or also expands the set of reachable solution paths. Figure~\ref{fig:pass_main} therefore evaluates Pass@$k$ on AIME24, AIME25, and MATH500. DASD outperforms GRPO and OPSD across the sampling range, with the gap widening as $k$ increases. On AIME24, for example, DASD starts close to GRPO at Pass@1, reaches $66.7\%$ at $k=16$, and continues to $83.3\%$ at $k=64$ and $90.0\%$ at $k=128$, while OPSD plateaus near $44\%$ after $k=16$. The widening gap indicates that DASD improves coverage of diverse reasoning trajectories rather than only sharpening a single-sample prediction.

\textbf{Reasoning Health Diagnosis.}
Finally, we check whether the performance gains preserve the two reasoning-health axes introduced above: rigorous execution and flexible exploration. Table~\ref{tab:epistemic_recovery} reports these diagnostics at convergence on AIME24 rollouts (Qwen3-1.7B).

Table~\ref{tab:epistemic_recovery} shows the trade-off hidden by aggregate accuracy. OPSD slightly improves local execution over GRPO, but collapses exploration: $E(y)$ density, RevRate, and Dist-3 all fall sharply. DASD is the only method that improves both sides, exceeding GRPO and OPSD on execution while also surpassing GRPO on exploration. Thus the task-performance gains in Table~\ref{tab:main_results} do not come from sacrificing self-correction or trajectory diversity.

\begin{table}[H]
  \centering
  \begingroup
  \small
  \setlength{\tabcolsep}{6pt}
  \renewcommand{\arraystretch}{1.02}
  \setlength{\aboverulesep}{0.25ex}
  \setlength{\belowrulesep}{0.35ex}
  \caption{\textbf{Cross-domain and cross-family evaluation.}
  Domain columns are Qwen3-8B averages; family columns are math averages with model scales around 8B.
  Full results are in Appendix~\ref{app:cross_domain_family}.}
  \label{tab:cross_domain_family_main}
  \begin{tabular}{@{}lrrrrrr@{}}
  \toprule
  \rowcolor{black!4}
  \textbf{Method}
    & \multicolumn{3}{c}{\textbf{Domain}}
    & \multicolumn{3}{c}{\textbf{Family}} \\
  \cmidrule(lr){2-4}\cmidrule(l){5-7}
    & \textbf{Code} & \textbf{Sci.} & \textbf{Tool}
    & \textbf{Qwen3} & \textbf{Olmo} & \textbf{Llama} \\
  \midrule
  Base        & 72.1 & 62.5 & 69.6 & 38.7 & 33.8 & 31.1 \\
  +GRPO       & 72.3 & 63.0 & 70.1 & 60.4 & 49.4 & 44.7 \\
  +OPSD       & 69.9 & 64.3 & 70.3 & 38.2 & 34.5 & 31.4 \\
  \rowcolor{bsdcol!8}
  \textbf{+DASD}
              & \textbf{73.7} & \textbf{65.2} & \textbf{71.1}
              & \textbf{67.0} & \textbf{54.2} & \textbf{49.1} \\
  \bottomrule
  \end{tabular}
  \endgroup
\end{table}

\textbf{Cross-Domain and Cross-Family Evaluation.}
Table~\ref{tab:cross_domain_family_main} asks whether the gains extend beyond the main Qwen3 math setting. For cross-domain evaluation, we keep the Qwen3-8B family fixed and average two benchmarks per domain: code, scientific reasoning, and tool use. For cross-family evaluation, we keep the math domain fixed and compare Qwen3, Olmo, and Llama backbones. DASD improves every domain average and every family average, while OPSD is often close to the base model in most settings. These results suggest that DASD is not only a Qwen3 math-specific improvement; full result details are in Appendix~\ref{app:cross_domain_family}.

\section{Analysis}\label{sec:analysis}

The experiments above show that DASD improves reasoning while preserving both execution and exploration. We now test the mechanism with three targeted checks: whether the entropy substrate survives through training and at convergence, whether the protected forks and revisions are causally useful, and whether the specific routing outperforms beats plausible replacements.

\begin{figure}[H]
  \centering
  \widegraphics[0.84\linewidth]{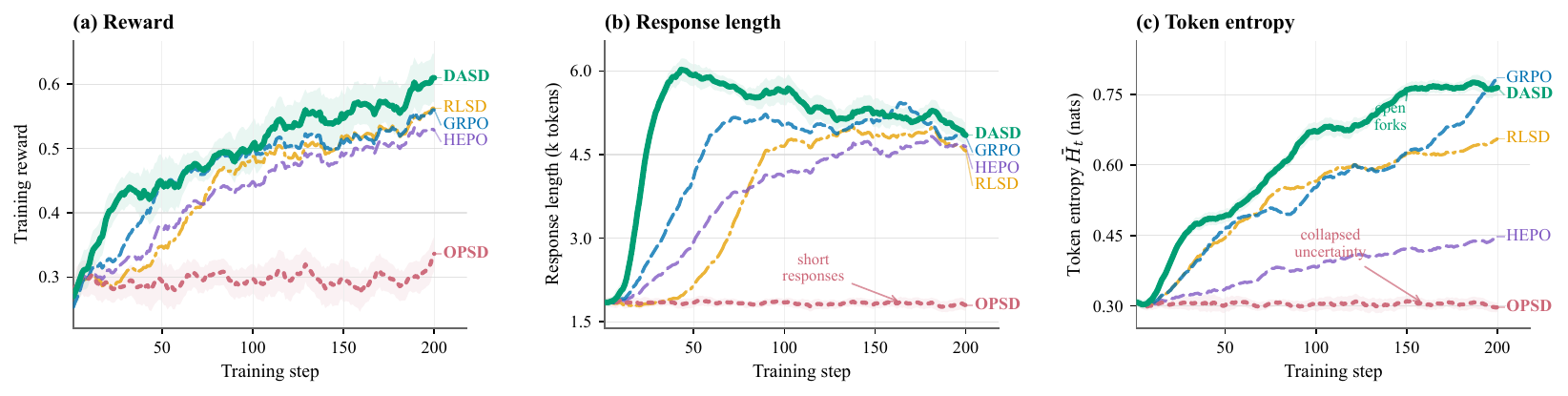}
  \widecaption{Training dynamics on Qwen3-1.7B over 200 updates.
  \textbf{(a)}~Training reward.
  \textbf{(b)}~Mean response length.
  \textbf{(c)}~Mean token-level entropy $\bar{H}_t$;
  DASD couples fast reward growth with sustained response length and preserved token entropy, whereas OPSD compresses both length and uncertainty.}
  \label{fig:training_dynamics}
\end{figure}

\refstepcounter{subsection}\label{sec:routing_verification}
\noindent\textbf{RQ1: Does DASD preserve the entropy substrate, and do the two routing arms separate roles?}
DASD needs a meaningful high-entropy tail: if training collapses all tokens into low uncertainty, there are no forks left for the repulsive arm to protect. We therefore analyze the substrate at two time scales: training trajectories for reward, length, and entropy, followed by endpoint entropy density and selective flips of either routing arm.

Figure~\ref{fig:training_dynamics} shows the optimization path behind the mechanism. DASD raises reward while maintaining long responses and non-collapsed token entropy; OPSD instead remains low-reward and compresses both length and entropy. Thus DASD does not merely reach a better endpoint: it preserves the uncertainty substrate during training. We next test whether the remaining uncertainty has the predicted high-$H_t$ fork structure and whether each routing arm supports the intended competence.

\begin{figure}[H]
  \centering
  \widegraphics[0.86\linewidth]{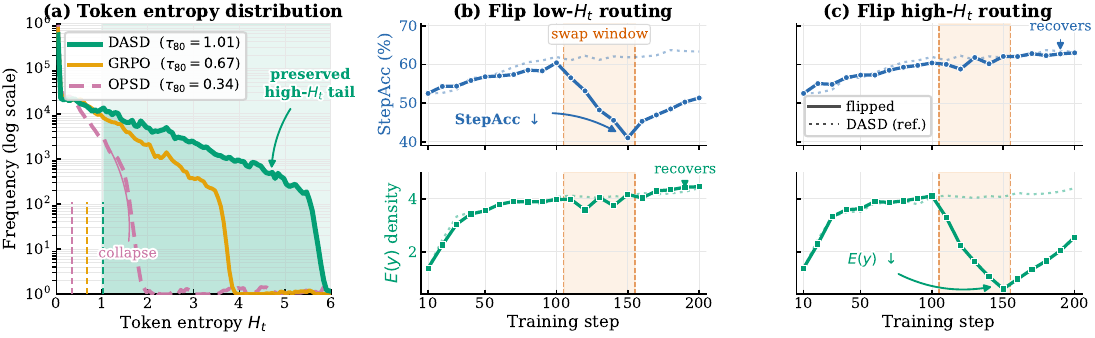}
  \widecaption{Entropy preservation and selective routing disruption.
    \textbf{(a)}~Log-frequency entropy curves compare the endpoint token-entropy distributions of OPSD, GRPO, and DASD.
    \textbf{(b)}~Flipping the low-$H_t$ routing arm primarily affects StepAcc, indicating its role in low-uncertainty execution tokens.
    \textbf{(c)}~Flipping the high-$H_t$ routing arm primarily affects $E(y)$, indicating its role in high-uncertainty exploratory forks. Dotted curves denote unperturbed DASD.}
  \label{fig:routing}
\end{figure}

Figure~\ref{fig:routing}(a) verifies the prerequisite for DASD's mechanism: unlike OPSD, DASD keeps a visible high-$H_t$ tail where alternative continuations remain available, with the 80th-percentile ordering OPSD $<$ GRPO $<$ DASD. The swap results then identify the role of each arm. Flipping low-$H_t$ routing hurts StepAcc while $E(y)$ only fluctuates before recovering, whereas flipping high-$H_t$ routing suppresses $E(y)$ while StepAcc only fluctuates before recovering. Thus DASD both preserves the fork population and assigns its two routing arms to distinct reasoning competences: execution for low-$H_t$ scaffolding and exploration for high-$H_t$ forks.

\begin{figure}[H]
  \centering
  \widegraphics[0.88\linewidth]{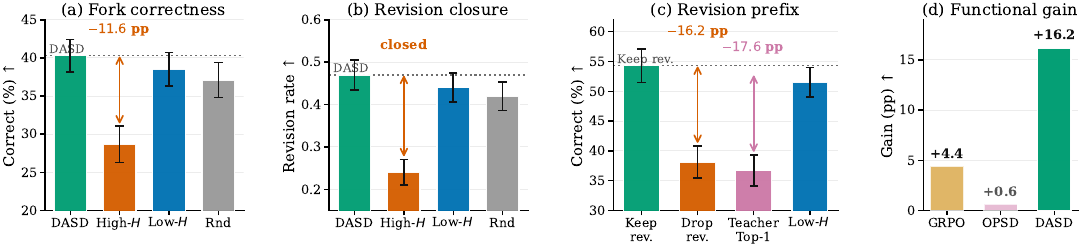}
  \widecaption{Causal intervention tests on DASD forks and revisions.
    \textbf{(a,b)}~Replacing selected tokens with the privileged teacher's top choice measures the effect of high-$H_t$, low-$H_t$, and random-position interventions on correctness and revision behavior.
    \textbf{(c)}~At matched DASD revision prefixes, preserving, suppressing, or teacher-forcing the revision continuation tests its effect on final correctness.
    \textbf{(d)}~The preserve--suppress comparison is shown across DASD, GRPO, and OPSD revisions.}
  \label{fig:causal_interventions}
\end{figure}

\refstepcounter{subsection}\label{sec:causal_interventions}
\noindent\textbf{RQ2: Are the protected forks and the revisions they enable causally useful?}
RQ1 shows that DASD preserves exploratory forks, but not whether those forks matter for final answers. We test this in two complementary ways. First, we intervene on \emph{where} DASD acts by replacing the highest-$H_t$ repulsive token with the privileged teacher's top choice, using low-$H_t$ and random-position replacements as controls. Second, we intervene on \emph{what behavior} matters at such prefixes by preserving or suppressing revision continuations.

Figure~\ref{fig:causal_interventions}(a,b) shows that high-$H_t$ forks are load-bearing: teacher-forcing them both reduces final correctness and closes the revision behavior that DASD preserves, whereas low-$H_t$ and random controls are much weaker. Figure~\ref{fig:causal_interventions}(c,d) shows why these forks matter: holding the prefix fixed, preserving DASD's revision continuation yields a large causal gain, while GRPO and OPSD revisions provide little benefit. Together, the interventions show that DASD's repulsive arm protects decision points where functional self-correction changes the final answer.

\refstepcounter{subsection}\label{sec:ablation_design}
\noindent\textbf{RQ3: Are DASD's design choices better than plausible replacements?}
DASD makes three design choices: student entropy as the routing signal, a smooth signed direction map, and a reliability gate on teacher--student gaps. Table~\ref{tab:design_space} tests compact replacements for each choice; full metrics are in Appendix~\ref{app:additional_analysis}.

\begin{table}[H]
  \centering
  \begingroup
  \small
  \setlength{\tabcolsep}{5pt}
  \renewcommand{\arraystretch}{1.02}
  \setlength{\aboverulesep}{0.35ex}
  \setlength{\belowrulesep}{0.45ex}
  \caption{\textbf{DASD design-space ablation} (Qwen3-1.7B). AIME columns report Avg@16; $E(y)$ is exploration density. Shaded rows are the selected DASD defaults; full grid in Appendix~\ref{app:additional_analysis}.}
  \label{tab:design_space}
  \begin{tabular}{@{}>{\raggedright\arraybackslash}p{0.42\linewidth}
    S[table-format=2.1]
    S[table-format=2.1]
    S[table-format=1.1]@{}}
  \toprule
  \rowcolor{black!4}
  \textbf{Variant}
    & {\textbf{AIME24}$\uparrow$}
    & {\textbf{AIME25}$\uparrow$}
    & {\textbf{$E(y)$}$\uparrow$} \\
  \midrule
  \multicolumn{4}{@{}l}{\textbf{A. Routing signal}} \\
  Position proxy & 17.8 & 15.9 & 1.6 \\
  Token frequency & 20.4 & 17.7 & 2.1 \\
  Gradient norm & 25.1 & 22.8 & 2.8 \\
  Attention entropy & \underline{30.9} & \underline{27.9} & \underline{3.9} \\
  \rowcolor{bsdcol!8}
  \textbf{Student entropy $H_t$} & \textbf{35.0} & \textbf{32.1} & \textbf{4.3} \\
  \addlinespace[1pt]
  \midrule
  \multicolumn{4}{@{}l}{\textbf{B. Direction map}} \\
  OPSD: const.\ $+1$ & 15.0 & 9.2 & 0.7 \\
  Novelty: const.\ $-1$ & 12.0 & 10.6 & \textbf{5.6} \\
  Hard threshold & 31.5 & 28.4 & 4.0 \\
  Linear ramp & \underline{33.6} & \underline{30.6} & 4.1 \\
  \rowcolor{bsdcol!8}
  \textbf{$\tanh$ direction} & \textbf{35.0} & \textbf{32.1} & \underline{4.3} \\
  \addlinespace[1pt]
  \midrule
  \multicolumn{4}{@{}l}{\textbf{C. Reliability gate}} \\
  No gate & \underline{29.4} & \underline{26.3} & \underline{3.8} \\
  Fixed threshold & 28.6 & 25.6 & 3.7 \\
  Magnitude-only & 30.8 & 27.7 & 4.0 \\
  \rowcolor{bsdcol!8}
  \textbf{Gap-reliability gate} & \textbf{35.0} & \textbf{32.1} & \textbf{4.3} \\
  \bottomrule
  \end{tabular}
  \endgroup
\end{table}

The ablation supports the full prescription rather than an arbitrary reweighting heuristic. Student entropy outperforms positional, gradient, and attention proxies; smooth signed routing improves over uniform signs and hard boundaries; and the gap gate prevents weak teacher disagreements from dominating the update. Combined with the interventions above, this shows that DASD works by preserving high-entropy forks, using them for functional revision, and applying teacher pressure only where its direction is appropriate.


\FloatBarrier
\section{Related Work}\label{sec:related}

\noindent\textbf{Entropy-guided policy optimization.}
A growing body of verifier-only refinements uses token entropy to reshape policy-gradient credit, e.g.\ restricting updates to high-entropy ``forking'' tokens~\citep{wang2025highentropy}, adding detached entropy bonuses for exploration~\citep{DBLP:conf/aaai/ChengHZDZZW26}, combining reward polarity with entropy in EAPO~\citep{he2026rethinkingtokenlevelcreditassignment}, probing signed RLVR directions in log-probability space~\citep{huang2026directionrlvrupdatesllm}, suppressing destabilizing low-probability low-entropy tokens~\citep{liu2026stapostabilizingreinforcementlearning}, or reweighting low-entropy segments by correctness~\citep{chen2025highentropyexplorationcorrectnessawarelowentropy}. All of these treat entropy as a \emph{magnitude} knob---it decides which tokens receive more or less RLVR pressure, while the supervision direction stays fixed by the verifier. DASD instead uses student entropy as a \emph{direction} router for privileged self-teacher supervision, attracting low-entropy scaffolding toward the teacher and repelling high-entropy forks away from it. To our knowledge, DASD is the first method to turn entropy from a weight on token updates into the sign selector for privileged teacher pressure.

\noindent\textbf{On-policy self-distillation (OPSD).}
On-policy self-distillation removes the external-teacher dependency of on-policy distillation~\citep{agarwal2024onpolicy} by reusing the same model as both student and teacher conditioned on privileged information~$r$~\citep{zhao2026opsd,hubotter2026sdpo,yang2026rlsd,li2026srpo,penaloza2026pidistill,stein2026gates,zhang2026piplay}. Recent analyses show, however, that uniform attraction to a solution-conditioned self-teacher suppresses epistemic markers, leaks privileged style, and collapses out-of-distribution reasoning~\citep{kim2026whyopsd,yang2026rlsd,fu2026revisiting}. Existing OPSD variants debate the magnitude, filtering, gating, scheduling, or location of teacher pressure, but never \emph{which direction} the pressure should point. DASD makes this pressure \emph{bipolar}: attractive on low-entropy scaffolding to stabilize execution and repulsive on high-entropy forks to preserve search, with the sign chosen from student entropy alone and no privileged input required at inference. Appendix~\ref{app:related_work} contains the full comparison to all relevant OPSD researches, entropy-based credit, and reasoning-elicitation methods.

\section{Conclusion}\label{sec:conclusion}

We diagnose OPSD's reasoning degradation as a problem of supervision \emph{direction}, not of privileged self-distillation itself: uniform attraction to the solution-conditioned teacher suppresses high-entropy forks, while uniform repulsion corrupts low-entropy scaffolding. Direction-Adaptive Self-Distillation (DASD) resolves this mismatch by letting student entropy $H_t$ route teacher pressure---pulling low-$H_t$ scaffolding toward the teacher to stabilize execution, pushing high-$H_t$ forks away to preserve exploration, and leaving the verifier as the global correctness anchor. Across six mathematical reasoning benchmarks and three model scales, DASD improves accuracy, broadens Pass@$k$ coverage, and recovers both rigorous step execution and flexible self-correction. These results indicate that the productive lever for self-distilled reasoning is the \emph{direction} of supervision, not merely its strength. Limitations and broader implications are discussed in Appendix~\ref{app:limitations_discussion}.

\bibliographystyle{unsrtnat}
{
\bibliography{neurips_2026}

\begin{thebibliography}{72}
\providecommand{\natexlab}[1]{#1}
\providecommand{\url}[1]{\texttt{#1}}
\expandafter\ifx\csname urlstyle\endcsname\relax
  \providecommand{\doi}[1]{doi: #1}\else
  \providecommand{\doi}{doi: \begingroup \urlstyle{rm}\Url}\fi

\bibitem[Hinton et~al.(2015)Hinton, Vinyals, and Dean]{hinton2015distilling}
Geoffrey Hinton, Oriol Vinyals, and Jeff Dean.
\newblock Distilling the knowledge in a neural network.
\newblock \emph{NeurIPS Deep Learning Workshop, arXiv preprint arXiv:1503.02531}, 2015.

\bibitem[Kim and Rush(2016)]{kim2016sequence}
Yoon Kim and Alexander~M. Rush.
\newblock Sequence-level knowledge distillation.
\newblock In \emph{Proceedings of the 2016 Conference on Empirical Methods in Natural Language Processing (EMNLP)}, 2016.

\bibitem[Furlanello et~al.(2018)Furlanello, Lipton, Tschannen, Itti, and Anandkumar]{furlanello2018born}
Tommaso Furlanello, Zachary~C. Lipton, Michael Tschannen, Laurent Itti, and Anima Anandkumar.
\newblock Born-again neural networks.
\newblock In \emph{Proceedings of the 35th International Conference on Machine Learning (ICML)}, 2018.

\bibitem[Xu et~al.(2024)Xu, Li, Tao, Shen, Cheng, Li, Xu, Tao, and Zhou]{xu2024kdsurvey}
Xiaohan Xu, Ming Li, Chongyang Tao, Tao Shen, Reynold Cheng, Jinyang Li, Can Xu, Dacheng Tao, and Tianyi Zhou.
\newblock A survey on knowledge distillation of large language models.
\newblock \emph{arXiv preprint arXiv:2402.13116}, 2024.

\bibitem[Snell et~al.(2022)Snell, Klein, and Zhong]{snell2022learning}
Charlie Snell, Dan Klein, and Ruiqi Zhong.
\newblock Learning by distilling context.
\newblock \emph{arXiv preprint arXiv:2209.15189}, 2022.

\bibitem[Song and Zheng(2026)]{song2026survey}
Mingyang Song and Mao Zheng.
\newblock A survey of on-policy distillation for large language models.
\newblock \emph{arXiv preprint arXiv:2604.00626}, 2026.

\bibitem[Schulman et~al.(2017)Schulman, Wolski, Dhariwal, Radford, and Klimov]{schulman2017ppo}
John Schulman, Filip Wolski, Prafulla Dhariwal, Alec Radford, and Oleg Klimov.
\newblock Proximal policy optimization algorithms.
\newblock \emph{arXiv preprint arXiv:1707.06347}, 2017.

\bibitem[Christiano et~al.(2017)Christiano, Leike, Brown, Martic, Legg, and Amodei]{christiano2017deeprl}
Paul~F. Christiano, Jan Leike, Tom~B. Brown, Miljan Martic, Shane Legg, and Dario Amodei.
\newblock Deep reinforcement learning from human preferences.
\newblock In \emph{Advances in Neural Information Processing Systems (NeurIPS)}, 2017.

\bibitem[Ouyang et~al.(2022)Ouyang, Wu, Jiang, Almeida, Wainwright, Mishkin, Zhang, Agarwal, Slama, Ray, Schulman, Hilton, Kelton, Miller, Simens, Askell, Welinder, Christiano, Leike, and Lowe]{ouyang2022instructgpt}
Long Ouyang, Jeffrey Wu, Xu~Jiang, Diogo Almeida, Carroll~L. Wainwright, Pamela Mishkin, Chong Zhang, Sandhini Agarwal, Katarina Slama, Alex Ray, John Schulman, Jacob Hilton, Fraser Kelton, Luke Miller, Maddie Simens, Amanda Askell, Peter Welinder, Paul Christiano, Jan Leike, and Ryan Lowe.
\newblock Training language models to follow instructions with human feedback.
\newblock In \emph{Advances in Neural Information Processing Systems (NeurIPS)}, 2022.

\bibitem[Shao et~al.(2024)Shao, Wang, Zhu, Xu, Song, Bi, Zhang, Zhang, Li, Wu, and Guo]{shao2024deepseekmath}
Zhihong Shao, Peiyi Wang, Qihao Zhu, Runxin Xu, Junxiao Song, Xiao Bi, Haowei Zhang, Mingchuan Zhang, Y.~K. Li, Y.~Wu, and Daya Guo.
\newblock {DeepSeekMath}: Pushing the limits of mathematical reasoning in open language models.
\newblock \emph{arXiv preprint arXiv:2402.03300}, 2024.

\bibitem[Guo et~al.(2025)Guo, Yang, Zhang, Song, Wang, Zhu, Xu, Zhang, Ma, Bi, Zhang, Yu, Wu, Wu, Gou, Shao, Li, Gao, Liu, Xue, Wang, Wu, Feng, Lu, Zhao, Deng, Ruan, Dai, Chen, Ji, Li, Lin, Dai, Luo, Hao, Chen, Li, et~al.]{guo2025deepseekr1}
Daya Guo, Dejian Yang, Haowei Zhang, Junxiao Song, Peiyi Wang, Qihao Zhu, Runxin Xu, Ruoyu Zhang, Shirong Ma, Xiao Bi, Xiaokang Zhang, Xingkai Yu, Yu~Wu, Z.~F. Wu, Zhibin Gou, Zhihong Shao, Zhuoshu Li, Ziyi Gao, Aixin Liu, Bing Xue, Bingxuan Wang, Bochao Wu, Bei Feng, Chengda Lu, Chenggang Zhao, Chengqi Deng, Chong Ruan, Damai Dai, Deli Chen, Dongjie Ji, Erhang Li, Fangyun Lin, Fucong Dai, Fuli Luo, Guangbo Hao, Guanting Chen, Guowei Li, et~al.
\newblock {DeepSeek-R1} incentivizes reasoning in {LLM}s through reinforcement learning.
\newblock \emph{Nature}, 645\penalty0 (8081):\penalty0 633--638, 2025.
\newblock \doi{10.1038/s41586-025-09422-z}.
\newblock URL \url{https://doi.org/10.1038/s41586-025-09422-z}.

\bibitem[Yu et~al.(2025)Yu, Zhang, Zhu, Yuan, Zuo, Yue, Dai, Fan, Liu, Liu, Liu, Liu, Lin, Lin, Ma, Sheng, Tong, Zhang, Zhang, Zhang, Zhang, Zhu, Zhu, Chen, Chen, Wang, Yu, Song, Wei, Zhou, Liu, Ma, Zhang, Yan, Wu, and Wang]{yu2026dapo}
Qiying Yu, Zheng Zhang, Ruofei Zhu, Yufeng Yuan, Xiaochen Zuo, Yu~Yue, Weinan Dai, Tiantian Fan, Gaohong Liu, Juncai Liu, LingJun Liu, Xin Liu, Haibin Lin, Zhiqi Lin, Bole Ma, Guangming Sheng, Yuxuan Tong, Chi Zhang, Mofan Zhang, Ru~Zhang, Wang Zhang, Hang Zhu, Jinhua Zhu, Jiaze Chen, Jiangjie Chen, Chengyi Wang, Hongli Yu, Yuxuan Song, Xiangpeng Wei, Hao Zhou, Jingjing Liu, Wei-Ying Ma, Ya-Qin Zhang, Lin Yan, Yonghui Wu, and Mingxuan Wang.
\newblock {DAPO}: An open-source {LLM} reinforcement learning system at scale.
\newblock In \emph{Advances in Neural Information Processing Systems (NeurIPS)}, 2025.
\newblock URL \url{https://openreview.net/forum?id=2a36EMSSTp}.

\bibitem[Zhang et~al.(2025)Zhang, Zuo, He, Sun, Liu, Jiang, Fan, Tian, Jia, Li, Fu, Lv, Zhang, Zeng, Qu, Li, Wang, Wang, Long, Liu, Xu, Ma, Zhu, Hua, Liu, Li, Chen, Qu, Li, Chen, Yuan, Gao, Li, Ma, Cui, Liu, Qi, Ding, and Zhou]{zhang2025rlsurvey}
Kaiyan Zhang, Yuxin Zuo, Bingxiang He, Youbang Sun, Runze Liu, Che Jiang, Yuchen Fan, Kai Tian, Guoli Jia, Pengfei Li, Yu~Fu, Xingtai Lv, Yuchen Zhang, Sihang Zeng, Shang Qu, Haozhan Li, Shijie Wang, Yuru Wang, Xinwei Long, Fangfu Liu, Xiang Xu, Jiaze Ma, Xuekai Zhu, Ermo Hua, Yihao Liu, Zonglin Li, Huayu Chen, Xiaoye Qu, Yafu Li, Weize Chen, Zhenzhao Yuan, Junqi Gao, Dong Li, Zhiyuan Ma, Ganqu Cui, Zhiyuan Liu, Biqing Qi, Ning Ding, and Bowen Zhou.
\newblock A survey of reinforcement learning for large reasoning models.
\newblock \emph{arXiv preprint arXiv:2509.08827}, 2025.

\bibitem[Liu et~al.(2025)Liu, Yang, Qian, Yin, Wang, Li, Liu, Zhai, Liu, and Zhang]{liu2025rlllmsurvey}
Keliang Liu, Dingkang Yang, Ziyun Qian, Weijie Yin, Yuchi Wang, Hongsheng Li, Jun Liu, Peng Zhai, Yang Liu, and Lihua Zhang.
\newblock Reinforcement learning meets large language models: A survey of advancements and applications across the {LLM} lifecycle.
\newblock \emph{arXiv preprint arXiv:2509.16679}, 2025.

\bibitem[Chen et~al.(2025{\natexlab{a}})Chen, Qin, Liu, Peng, Guan, Wang, Hu, Zhou, Gao, and Che]{chen2025cotsurvey}
Qiguang Chen, Libo Qin, Jinhao Liu, Dengyun Peng, Jiannan Guan, Peng Wang, Mengkang Hu, Yuhang Zhou, Te~Gao, and Wanxiang Che.
\newblock Towards reasoning era: A survey of long chain-of-thought for reasoning large language models.
\newblock \emph{arXiv preprint arXiv:2503.09567}, 2025{\natexlab{a}}.

\bibitem[Zhao et~al.(2026)Zhao, Xie, Liu, Huang, Pang, Chen, and Grover]{zhao2026opsd}
Siyan Zhao, Zhihui Xie, Mengchen Liu, Jing Huang, Guan Pang, Feiyu Chen, and Aditya Grover.
\newblock Self-distilled reasoner: On-policy self-distillation for large language models.
\newblock \emph{arXiv preprint arXiv:2601.18734}, 2026.

\bibitem[Li et~al.(2026{\natexlab{a}})Li, Yang, Fang, Song, Zheng, Guo, Zhang, Wang, and Chua]{li2026srpo}
Gengsheng Li, Tianyu Yang, Junfeng Fang, Mingyang Song, Mao Zheng, Haiyun Guo, Dan Zhang, Jinqiao Wang, and Tat-Seng Chua.
\newblock Unifying group-relative and self-distillation policy optimization via sample routing.
\newblock \emph{arXiv preprint arXiv:2604.02288}, 2026{\natexlab{a}}.

\bibitem[H{\"u}botter et~al.(2026)H{\"u}botter, L{\"u}beck, Behric, Baumann, Bagatella, Marta, Hakimi, Shenfeld, Kleine~Buening, Guestrin, and Krause]{hubotter2026sdpo}
Jonas H{\"u}botter, Frederike L{\"u}beck, Lejs Behric, Anton Baumann, Marco Bagatella, Daniel Marta, Ido Hakimi, Idan Shenfeld, Thomas Kleine~Buening, Carlos Guestrin, and Andreas Krause.
\newblock Reinforcement learning via self-distillation.
\newblock \emph{arXiv preprint arXiv:2601.20802}, 2026.

\bibitem[Yang et~al.(2026)Yang, Qin, Si, Chen, Gu, Yao, Lin, Wang, Wang, and Duan]{yang2026rlsd}
Chenxu Yang, Chuanyu Qin, Qingyi Si, Minghui Chen, Naibin Gu, Dingyu Yao, Zheng Lin, Weiping Wang, Jiaqi Wang, and Nan Duan.
\newblock Self-distilled {RLVR}.
\newblock \emph{arXiv preprint arXiv:2604.03128}, 2026.

\bibitem[Shenfeld et~al.(2026)Shenfeld, Damani, H{\"u}botter, and Agrawal]{shenfeld2026selfdistillation}
Idan Shenfeld, Mehul Damani, Jonas H{\"u}botter, and Pulkit Agrawal.
\newblock Self-distillation enables continual learning.
\newblock In \emph{ICLR 2026 Workshop on Lifelong Agents: Learning, Aligning, Evolving}, 2026.
\newblock URL \url{https://openreview.net/forum?id=HlWA3V6iKF}.

\bibitem[Kim et~al.(2026)Kim, Luo, Kim, Lee, Kim, Jeon, Li, and Yang]{kim2026whyopsd}
Jeonghye Kim, Xufang Luo, Minbeom Kim, Sangmook Lee, Dohyung Kim, Jiwon Jeon, Dongsheng Li, and Yuqing Yang.
\newblock Why does self-distillation (sometimes) degrade the reasoning capability of {LLM}s?
\newblock \emph{arXiv preprint arXiv:2603.24472}, 2026.

\bibitem[Agarwal et~al.(2024)Agarwal, Vieillard, Zhou, Stanczyk, Ramos, Geist, and Bachem]{agarwal2024onpolicy}
Rishabh Agarwal, Nino Vieillard, Yongchao Zhou, Piotr Stanczyk, Sabela Ramos, Matthieu Geist, and Olivier Bachem.
\newblock On-policy distillation of language models: Learning from self-generated mistakes.
\newblock In \emph{International Conference on Learning Representations (ICLR)}, 2024.

\bibitem[Zheng et~al.(2025)Zheng, Zhang, Zhang, Lin, Lu, Yu, Liu, Zhou, and Lin]{zheng2024processbench}
Chujie Zheng, Zhenru Zhang, Beichen Zhang, Runji Lin, Keming Lu, Bowen Yu, Dayiheng Liu, Jingren Zhou, and Junyang Lin.
\newblock {ProcessBench}: Identifying process errors in mathematical reasoning.
\newblock In \emph{Proceedings of the 63rd Annual Meeting of the Association for Computational Linguistics (ACL)}, 2025.

\bibitem[Wang et~al.(2026{\natexlab{a}})Wang, Yu, Gao, Zheng, Liu, Lu, Dang, Chen, Yang, Zhang, Liu, Yang, Zhao, Yue, Song, Yu, Huang, and Lin]{wang2025highentropy}
Shenzhi Wang, Le~Yu, Chang Gao, Chujie Zheng, Shixuan Liu, Rui Lu, Kai Dang, Xiong-Hui Chen, Jianxin Yang, Zhenru Zhang, Yuqiong Liu, An~Yang, Andrew Zhao, Yang Yue, Shiji Song, Bowen Yu, Gao Huang, and Junyang Lin.
\newblock Beyond the 80/20 rule: High-entropy minority tokens drive effective reinforcement learning for {LLM} reasoning.
\newblock In \emph{The Thirty-ninth Annual Conference on Neural Information Processing Systems}, 2026{\natexlab{a}}.
\newblock URL \url{https://openreview.net/forum?id=yfcpdY4gMP}.

\bibitem[Jin et~al.(2026)Jin, Min, Yang, Kadhe, Zhou, Wei, Baracaldo, and Lee]{jin2026entropy}
Woogyeol Jin, Taywon Min, Yongjin Yang, Swanand~Ravindra Kadhe, Yi~Zhou, Dennis Wei, Nathalie Baracaldo, and Kimin Lee.
\newblock Entropy-aware on-policy distillation of language models.
\newblock \emph{arXiv preprint arXiv:2603.07079}, 2026.

\bibitem[Yang et~al.(2025)Yang, Yang, Hui, et~al.]{yang2025qwen3}
An~Yang, Baosong Yang, Binyuan Hui, et~al.
\newblock {Qwen3} technical report.
\newblock \emph{arXiv preprint arXiv:2505.09388}, 2025.

\bibitem[Hendrycks et~al.(2021{\natexlab{a}})Hendrycks, Burns, Kadavath, Arora, Basart, Tang, Song, and Steinhardt]{hendrycks2021math}
Dan Hendrycks, Collin Burns, Saurav Kadavath, Akul Arora, Steven Basart, Eric Tang, Dawn Song, and Jacob Steinhardt.
\newblock Measuring mathematical problem solving with the {MATH} dataset.
\newblock In \emph{Advances in Neural Information Processing Systems (NeurIPS) Datasets and Benchmarks Track}, 2021{\natexlab{a}}.

\bibitem[Lewkowycz et~al.(2022)Lewkowycz, Andreassen, Dohan, Dyer, Michalewski, Ramasesh, Slone, Anil, Schlag, Gutman-Solo, Wu, Neyshabur, Gur-Ari, and Misra]{lewkowycz2022solving}
Aitor Lewkowycz, Anders Andreassen, David Dohan, Ethan Dyer, Henryk Michalewski, Vinay Ramasesh, Ambrose Slone, Cem Anil, Imanol Schlag, Theo Gutman-Solo, Yuhuai Wu, Behnam Neyshabur, Guy Gur-Ari, and Vedant Misra.
\newblock Solving quantitative reasoning problems with language models.
\newblock In \emph{Advances in Neural Information Processing Systems (NeurIPS)}, 2022.

\bibitem[He et~al.(2024)He, Luo, Bai, Hu, Thai, Shen, Hu, Han, Huang, Zhang, Liu, Qi, Liu, and Sun]{he2024olympiadbench}
Chaoqun He, Renjie Luo, Yuzhuo Bai, Shengding Hu, Zhen~Leng Thai, Junhao Shen, Jinyi Hu, Xu~Han, Yujie Huang, Yuxiang Zhang, Jie Liu, Lei Qi, Zhiyuan Liu, and Maosong Sun.
\newblock {OlympiadBench}: A challenging benchmark for promoting {AGI} with olympiad-level bilingual multimodal scientific problems.
\newblock In \emph{Proceedings of the 62nd Annual Meeting of the Association for Computational Linguistics (ACL)}, 2024.

\bibitem[Sheng et~al.(2025)Sheng, Zhang, Ye, Wu, Zhang, Zhang, Peng, Lin, and Wu]{sheng2024hybridflow}
Guangming Sheng, Chi Zhang, Zilingfeng Ye, Xibin Wu, Wang Zhang, Ru~Zhang, Yanghua Peng, Haibin Lin, and Chuan Wu.
\newblock Hybridflow: A flexible and efficient rlhf framework.
\newblock In \emph{Proceedings of the Twentieth European Conference on Computer Systems}, EuroSys '25, page 1279–1297, New York, NY, USA, 2025. Association for Computing Machinery.
\newblock ISBN 9798400711961.
\newblock \doi{10.1145/3689031.3696075}.
\newblock URL \url{https://doi.org/10.1145/3689031.3696075}.

\bibitem[Cheng et~al.(2026)Cheng, Huang, Zhu, Dai, Zhao, Zhang, and Wei]{DBLP:conf/aaai/ChengHZDZZW26}
Daixuan Cheng, Shaohan Huang, Xuekai Zhu, Bo~Dai, Xin Zhao, Zhenliang Zhang, and Furu Wei.
\newblock Reasoning with exploration: An entropy perspective.
\newblock In Sven Koenig, Chad Jenkins, and Matthew~E. Taylor, editors, \emph{Fortieth {AAAI} Conference on Artificial Intelligence, Thirty-Eighth Conference on Innovative Applications of Artificial Intelligence, Sixteenth Symposium on Educational Advances in Artificial Intelligence, {AAAI} 2026, Singapore, January 20-27, 2026}, pages 30377--30385. {AAAI} Press, 2026.
\newblock \doi{10.1609/AAAI.V40I36.40290}.
\newblock URL \url{https://doi.org/10.1609/aaai.v40i36.40290}.

\bibitem[He et~al.(2026{\natexlab{a}})He, Wu, Liu, Ge, Zhou, Wu, Zheng, Lin, Zhong, and Zhang]{he2026rethinkingtokenlevelcreditassignment}
Yuhang He, Haodong Wu, Siyi Liu, Hongyu Ge, Hange Zhou, Keyi Wu, Zhuo Zheng, Qihong Lin, Zixin Zhong, and Yongqi Zhang.
\newblock Rethinking token-level credit assignment in rlvr: A polarity-entropy analysis, 2026{\natexlab{a}}.
\newblock URL \url{https://arxiv.org/abs/2604.11056}.

\bibitem[Huang et~al.(2026)Huang, Meng, Wu, Lu, Ma, Chen, Wang, Ding, Wu, Wang, He, Wang, and Zhou]{huang2026directionrlvrupdatesllm}
Kexin Huang, Haoming Meng, Junkang Wu, Jinda Lu, Chiyu Ma, Ziqian Chen, Xue Wang, Bolin Ding, Jiancan Wu, Xiang Wang, Xiangnan He, Guoyin Wang, and Jingren Zhou.
\newblock On the direction of rlvr updates for llm reasoning: Identification and exploitation, 2026.
\newblock URL \url{https://arxiv.org/abs/2603.22117}.

\bibitem[Liu et~al.(2026)Liu, He, Zhan, Tao, Zheng, Wu, Wang, Guan, Sheng, Zhang, Li, Duan, and Li]{liu2026stapostabilizingreinforcementlearning}
Shiqi Liu, Zeyu He, Guojian Zhan, Letian Tao, Zhilong Zheng, Jiang Wu, Yinuo Wang, Yang Guan, Kehua Sheng, Bo~Zhang, Keqiang Li, Jingliang Duan, and Shengbo~Eben Li.
\newblock Stapo: Stabilizing reinforcement learning for llms by silencing rare spurious tokens, 2026.
\newblock URL \url{https://arxiv.org/abs/2602.15620}.

\bibitem[Chen et~al.(2025{\natexlab{b}})Chen, Li, Sun, and Yu]{chen2025highentropyexplorationcorrectnessawarelowentropy}
Xinzhu Chen, Xuesheng Li, Zhongxiang Sun, and Weijie Yu.
\newblock Beyond high-entropy exploration: Correctness-aware low-entropy segment-based advantage shaping for reasoning llms, 2025{\natexlab{b}}.
\newblock URL \url{https://arxiv.org/abs/2512.00908}.

\bibitem[Penaloza et~al.(2026)Penaloza, Vattikonda, Gontier, Lacoste, Charlin, and Caccia]{penaloza2026pidistill}
Emiliano Penaloza, Dheeraj Vattikonda, Nicolas Gontier, Alexandre Lacoste, Laurent Charlin, and Massimo Caccia.
\newblock Privileged information distillation for language models.
\newblock \emph{arXiv preprint arXiv:2602.04942}, 2026.

\bibitem[Stein et~al.(2026)Stein, Huang, and Goldstein]{stein2026gates}
Alex Stein, Furong Huang, and Tom Goldstein.
\newblock {GATES}: Self-distillation under privileged context with consensus gating.
\newblock \emph{arXiv preprint arXiv:2602.20574}, 2026.

\bibitem[Zhang et~al.(2026{\natexlab{a}})Zhang, Zhu, Chong, Tu, et~al.]{zhang2026piplay}
Yaocheng Zhang, Yuanheng Zhu, Wenyue Chong, Songjun Tu, et~al.
\newblock {$\pi$-Play}: Multi-agent self-play via privileged self-distillation without external data.
\newblock \emph{arXiv preprint arXiv:2604.14054}, 2026{\natexlab{a}}.

\bibitem[Fu et~al.(2026)Fu, Huang, Jiang, Zhu, and Zhao]{fu2026revisiting}
Yuqian Fu, Haohuan Huang, Kaiwen Jiang, Yuanheng Zhu, and Dongbin Zhao.
\newblock Revisiting on-policy distillation: Empirical failure modes and simple fixes.
\newblock \emph{arXiv preprint arXiv:2603.25562}, 2026.

\bibitem[Liu et~al.(2023)Liu, Xia, Wang, and Zhang]{liu2023evalplus}
Jiawei Liu, Chunqiu~Steven Xia, Yuyao Wang, and Lingming Zhang.
\newblock Is your code generated by {ChatGPT} really correct? rigorous evaluation of large language models for code generation.
\newblock In \emph{Advances in Neural Information Processing Systems (NeurIPS)}, 2023.

\bibitem[Rein et~al.(2024)Rein, Hou, Stickland, Petty, Pang, Dirani, Michael, and Bowman]{rein2023gpqa}
David Rein, Betty~Li Hou, Asa~Cooper Stickland, Jackson Petty, Richard~Yuanzhe Pang, Julien Dirani, Julian Michael, and Samuel~R. Bowman.
\newblock {GPQA}: A graduate-level google-proof {Q\&A} benchmark.
\newblock In \emph{Conference on Language Modeling (COLM)}, 2024.

\bibitem[Hendrycks et~al.(2021{\natexlab{b}})Hendrycks, Burns, Basart, Zou, Mazeika, Song, and Steinhardt]{hendrycks2021mmlu}
Dan Hendrycks, Collin Burns, Steven Basart, Andy Zou, Mantas Mazeika, Dawn Song, and Jacob Steinhardt.
\newblock Measuring massive multitask language understanding.
\newblock In \emph{International Conference on Learning Representations (ICLR)}, 2021{\natexlab{b}}.

\bibitem[Patil et~al.(2025)Patil, Mao, Yan, Ji, Suresh, Stoica, and Gonzalez]{patil2025bfcl}
Shishir~G Patil, Huanzhi Mao, Fanjia Yan, Charlie Cheng-Jie Ji, Vishnu Suresh, Ion Stoica, and Joseph~E. Gonzalez.
\newblock The berkeley function calling leaderboard ({BFCL}): From tool use to agentic evaluation of large language models.
\newblock In \emph{Forty-second International Conference on Machine Learning}, 2025.
\newblock URL \url{https://openreview.net/forum?id=2GmDdhBdDk}.

\bibitem[Olmo et~al.(2026)Olmo, :, Ettinger, Bertsch, Kuehl, Graham, Heineman, Groeneveld, Brahman, Timbers, Ivison, Morrison, Poznanski, Lo, Soldaini, Jordan, Chen, Noukhovitch, Lambert, Walsh, Dasigi, Berry, Malik, Shah, Geng, Arora, Gupta, Anderson, Xiao, Murray, Romero, Graf, Asai, Bhagia, Wettig, Liu, Rangapur, Anastasiades, Huang, Schwenk, Trivedi, Magnusson, Lochner, Liu, Miranda, Sap, Morgan, Schmitz, Guerquin, Wilson, Huff, Bras, Xin, Shao, Skjonsberg, Shen, Li, Wilde, Pyatkin, Merrill, Chang, Gu, Zeng, Sabharwal, Zettlemoyer, Koh, Farhadi, Smith, and Hajishirzi]{olmo2026}
Team Olmo, :, Allyson Ettinger, Amanda Bertsch, Bailey Kuehl, David Graham, David Heineman, Dirk Groeneveld, Faeze Brahman, Finbarr Timbers, Hamish Ivison, Jacob Morrison, Jake Poznanski, Kyle Lo, Luca Soldaini, Matt Jordan, Mayee Chen, Michael Noukhovitch, Nathan Lambert, Pete Walsh, Pradeep Dasigi, Robert Berry, Saumya Malik, Saurabh Shah, Scott Geng, Shane Arora, Shashank Gupta, Taira Anderson, Teng Xiao, Tyler Murray, Tyler Romero, Victoria Graf, Akari Asai, Akshita Bhagia, Alexander Wettig, Alisa Liu, Aman Rangapur, Chloe Anastasiades, Costa Huang, Dustin Schwenk, Harsh Trivedi, Ian Magnusson, Jaron Lochner, Jiacheng Liu, Lester James~V. Miranda, Maarten Sap, Malia Morgan, Michael Schmitz, Michal Guerquin, Michael Wilson, Regan Huff, Ronan~Le Bras, Rui Xin, Rulin Shao, Sam Skjonsberg, Shannon~Zejiang Shen, Shuyue~Stella Li, Tucker Wilde, Valentina Pyatkin, Will Merrill, Yapei Chang, Yuling Gu, Zhiyuan Zeng, Ashish Sabharwal, Luke Zettlemoyer, Pang~Wei Koh, Ali Farhadi, Noah~A. Smith, and Hannaneh
  Hajishirzi.
\newblock Olmo 3, 2026.
\newblock URL \url{https://arxiv.org/abs/2512.13961}.

\bibitem[Grattafiori et~al.(2024)Grattafiori, Dubey, Jauhri, Pandey, Kadian, Al-Dahle, Letman, Mathur, Schelten, Vaughan, et~al.]{grattafiori2024llama3}
Aaron Grattafiori, Abhimanyu Dubey, Abhinav Jauhri, Abhinav Pandey, Abhishek Kadian, Ahmad Al-Dahle, Aiesha Letman, Akhil Mathur, Alan Schelten, Alex Vaughan, et~al.
\newblock The {Llama 3} herd of models.
\newblock \emph{arXiv preprint arXiv:2407.21783}, 2024.

\bibitem[Schulman et~al.(2016)Schulman, Moritz, Levine, Jordan, and Abbeel]{schulman2016gae}
John Schulman, Philipp Moritz, Sergey Levine, Michael~I. Jordan, and Pieter Abbeel.
\newblock High-dimensional continuous control using generalized advantage estimation.
\newblock In \emph{International Conference on Learning Representations (ICLR)}, 2016.

\bibitem[Cobbe et~al.(2021)Cobbe, Kosaraju, Bavarian, Chen, Jun, Kaiser, Plappert, Tworek, Hilton, Nakano, Hesse, and Schulman]{cobbe2021gsm8k}
Karl Cobbe, Vineet Kosaraju, Mohammad Bavarian, Mark Chen, Heewoo Jun, Lukasz Kaiser, Matthias Plappert, Jerry Tworek, Jacob Hilton, Reiichiro Nakano, Christopher Hesse, and John Schulman.
\newblock Training verifiers to solve math word problems.
\newblock \emph{arXiv preprint arXiv:2110.14168}, 2021.

\bibitem[Uesato et~al.(2022)Uesato, Kushman, Kumar, Song, Siegel, Wang, Creswell, Irving, and Higgins]{uesato2022solving}
Jonathan Uesato, Nate Kushman, Ramana Kumar, Francis Song, Noah Siegel, Lisa Wang, Antonia Creswell, Geoffrey Irving, and Irina Higgins.
\newblock Solving math word problems with process- and outcome-based feedback.
\newblock \emph{arXiv preprint arXiv:2211.14275}, 2022.

\bibitem[Lightman et~al.(2024)Lightman, Kosaraju, Burda, Edwards, Baker, Lee, Leike, Schulman, Sutskever, and Cobbe]{lightman2024letsverify}
Hunter Lightman, Vineet Kosaraju, Yura Burda, Harri Edwards, Bowen Baker, Teddy Lee, Jan Leike, John Schulman, Ilya Sutskever, and Karl Cobbe.
\newblock Let's verify step by step.
\newblock In \emph{International Conference on Learning Representations (ICLR)}, 2024.

\bibitem[Wang et~al.(2024)Wang, Li, Shao, Xu, Dai, Li, Chen, Wu, and Sui]{wang2024mathshepherd}
Peiyi Wang, Lei Li, Zhihong Shao, Runxin Xu, Damai Dai, Yifei Li, Deli Chen, Yu~Wu, and Zhifang Sui.
\newblock {Math-Shepherd}: Verify and reinforce {LLM}s step-by-step without human annotations.
\newblock In \emph{Proceedings of the 62nd Annual Meeting of the Association for Computational Linguistics (ACL)}, 2024.

\bibitem[Li et~al.(2026{\natexlab{b}})Li, Zuo, He, Zhang, Xiao, Qian, Yu, Gao, Yang, Liu, and Ding]{li2026rethinking}
Yaxuan Li, Yuxin Zuo, Bingxiang He, Jinqian Zhang, Chaojun Xiao, Cheng Qian, Tianyu Yu, Huan-ang Gao, Wenkai Yang, Zhiyuan Liu, and Ning Ding.
\newblock Rethinking on-policy distillation of large language models: Phenomenology, mechanism, and recipe.
\newblock \emph{arXiv preprint arXiv:2604.13016}, 2026{\natexlab{b}}.

\bibitem[Ye et~al.(2026{\natexlab{a}})Ye, Dong, Wu, Huang, and Wei]{ye2026opcd}
Tianzhu Ye, Li~Dong, Xun Wu, Shaohan Huang, and Furu Wei.
\newblock On-policy context distillation for language models.
\newblock \emph{arXiv preprint arXiv:2602.12275}, 2026{\natexlab{a}}.

\bibitem[Ye et~al.(2026{\natexlab{b}})Ye, Dong, Dong, Wu, Huang, and Wei]{ye2026oel}
Tianzhu Ye, Li~Dong, Qingxiu Dong, Xun Wu, Shaohan Huang, and Furu Wei.
\newblock Online experiential learning for language models.
\newblock \emph{arXiv preprint arXiv:2603.16856}, 2026{\natexlab{b}}.

\bibitem[Ye et~al.(2025)Ye, Dong, Chi, Wu, Huang, and Wei]{ye2025gad}
Tianzhu Ye, Li~Dong, Zewen Chi, Xun Wu, Shaohan Huang, and Furu Wei.
\newblock Black-box on-policy distillation of large language models.
\newblock \emph{arXiv preprint arXiv:2511.10643}, 2025.

\bibitem[Chen et~al.(2026)Chen, Wang, Zhu, Qiu, Dong, Deng, Sang, Wang, Geramifard, and Luo]{chen2026soda}
Xiwen Chen, Jingjing Wang, Wenhui Zhu, Peijie Qiu, Xuanzhao Dong, Yueyue Deng, Hejian Sang, Zhipeng Wang, Alborz Geramifard, and Feng Luo.
\newblock {SODA}: Semi on-policy black-box distillation for large language models.
\newblock \emph{arXiv preprint arXiv:2604.03873}, 2026.

\bibitem[Wang et~al.(2026{\natexlab{b}})Wang, Liu, Chen, Hu, Zhang, Cao, Wang, Yang, Xie, and Chen]{wang2026madopd}
Jianze Wang, Ying Liu, Jinlong Chen, Xuchun Hu, Qilong Zhang, Yu~Cao, Jun Wang, Hua Yang, Yong Xie, and Qianglong Chen.
\newblock {MAD-OPD}: Breaking the ceiling in on-policy distillation via multi-agent debate.
\newblock \emph{arXiv preprint arXiv:2605.01347}, 2026{\natexlab{b}}.

\bibitem[Sang et~al.(2026)Sang, Xu, Zhou, He, Wang, and Sun]{sang2026crisp}
Hejian Sang, Yuanda Xu, Zhengze Zhou, Ran He, Zhipeng Wang, and Jiachen Sun.
\newblock Crisp: Compressed reasoning via iterative self-policy distillation.
\newblock \emph{arXiv preprint arXiv:2603.05433}, 2026.

\bibitem[He et~al.(2026{\natexlab{b}})He, Kaur, Bhaskar, Yang, Liu, Ri, Fowl, Panigrahi, Chen, and Arora]{he2026sdzero}
Yinghui He, Simran Kaur, Adithya Bhaskar, Yongjin Yang, Jiarui Liu, Narutatsu Ri, Liam Fowl, Abhishek Panigrahi, Danqi Chen, and Sanjeev Arora.
\newblock Self-distillation zero: Self-revision turns binary rewards into dense supervision.
\newblock \emph{arXiv preprint arXiv:2604.12002}, 2026{\natexlab{b}}.

\bibitem[Zhang et~al.(2026{\natexlab{b}})Zhang, Ding, Pan, Yang, Kang, Xiong, and Gu]{zhang2026opsdl}
Xinsen Zhang, Zhenkai Ding, Tianjun Pan, Run Yang, Chun Kang, Xue Xiong, and Jingnan Gu.
\newblock Opsdl: On-policy self-distillation for long-context language models, 2026{\natexlab{b}}.
\newblock URL \url{https://arxiv.org/abs/2604.17535}.

\bibitem[Zhang et~al.(2026{\natexlab{c}})Zhang, Bai, Zheng, Jaitly, Collobert, and Zhang]{zhang2026ssd}
Ruixiang Zhang, Richard~He Bai, Huangjie Zheng, Navdeep Jaitly, Ronan Collobert, and Yizhe Zhang.
\newblock Embarrassingly simple self-distillation improves code generation.
\newblock \emph{arXiv preprint arXiv:2604.01193}, 2026{\natexlab{c}}.

\bibitem[Wei et~al.(2022{\natexlab{a}})Wei, Wang, Schuurmans, Bosma, Ichter, Xia, Chi, Le, and Zhou]{wei2022cot}
Jason Wei, Xuezhi Wang, Dale Schuurmans, Maarten Bosma, Brian Ichter, Fei Xia, Ed~H. Chi, Quoc~V. Le, and Denny Zhou.
\newblock Chain-of-thought prompting elicits reasoning in large language models.
\newblock In \emph{Advances in Neural Information Processing Systems (NeurIPS)}, 2022{\natexlab{a}}.

\bibitem[Kojima et~al.(2022)Kojima, Gu, Reid, Matsuo, and Iwasawa]{kojima2022zeroshot}
Takeshi Kojima, Shixiang~Shane Gu, Machel Reid, Yutaka Matsuo, and Yusuke Iwasawa.
\newblock Large language models are zero-shot reasoners.
\newblock In \emph{Advances in Neural Information Processing Systems (NeurIPS)}, 2022.

\bibitem[Wei et~al.(2022{\natexlab{b}})Wei, Tay, Bommasani, Raffel, Zoph, Borgeaud, Yogatama, Bosma, Zhou, Metzler, Chi, Hashimoto, Vinyals, Liang, Dean, and Fedus]{wei2022emergent}
Jason Wei, Yi~Tay, Rishi Bommasani, Colin Raffel, Barret Zoph, Sebastian Borgeaud, Dani Yogatama, Maarten Bosma, Denny Zhou, Donald Metzler, Ed~H. Chi, Tatsunori Hashimoto, Oriol Vinyals, Percy Liang, Jeff Dean, and William Fedus.
\newblock Emergent abilities of large language models.
\newblock \emph{Transactions on Machine Learning Research (TMLR)}, 2022{\natexlab{b}}.

\bibitem[Wang et~al.(2023)Wang, Wei, Schuurmans, Le, Chi, Narang, Chowdhery, and Zhou]{wang2023selfconsistency}
Xuezhi Wang, Jason Wei, Dale Schuurmans, Quoc~V. Le, Ed~H. Chi, Sharan Narang, Aakanksha Chowdhery, and Denny Zhou.
\newblock Self-consistency improves chain of thought reasoning in language models.
\newblock In \emph{International Conference on Learning Representations (ICLR)}, 2023.

\bibitem[Yao et~al.(2023)Yao, Yu, Zhao, Shafran, Griffiths, Cao, and Narasimhan]{yao2023tot}
Shunyu Yao, Dian Yu, Jeffrey Zhao, Izhak Shafran, Thomas~L. Griffiths, Yuan Cao, and Karthik Narasimhan.
\newblock Tree of thoughts: Deliberate problem solving with large language models.
\newblock In \emph{Advances in Neural Information Processing Systems (NeurIPS)}, 2023.

\bibitem[Zelikman et~al.(2022)Zelikman, Wu, Mu, and Goodman]{zelikman2022star}
Eric Zelikman, Yuhuai Wu, Jesse Mu, and Noah~D. Goodman.
\newblock {STaR}: Bootstrapping reasoning with reasoning.
\newblock In \emph{Advances in Neural Information Processing Systems (NeurIPS)}, 2022.

\bibitem[Singh et~al.(2024)Singh, Co-Reyes, Agarwal, Anand, Patil, Garcia, Liu, Harrison, Lee, Xu, Parisi, Kumar, Alemi, Rizkowsky, Nova, Adlam, Bohnet, Elsayed, Sedghi, Mordatch, Simpson, Gur, Snoek, Pennington, Hron, Kenealy, Swersky, Mahajan, Culp, Xiao, Bileschi, Constant, Novak, Liu, Warkentin, Qian, Bansal, Dyer, Neyshabur, Sohl-Dickstein, and Fiedel]{singh2024beyond}
Avi Singh, John~D. Co-Reyes, Rishabh Agarwal, Ankesh Anand, Piyush Patil, Xavier Garcia, Peter~J. Liu, James Harrison, Jaehoon Lee, Kelvin Xu, Aaron Parisi, Abhishek Kumar, Alex Alemi, Alex Rizkowsky, Azade Nova, Ben Adlam, Bernd Bohnet, Gamaleldin Elsayed, Hanie Sedghi, Igor Mordatch, Isabelle Simpson, Izzeddin Gur, Jasper Snoek, Jeffrey Pennington, Jiri Hron, Kathleen Kenealy, Kevin Swersky, Kshiteej Mahajan, Laura Culp, Lechao Xiao, Maxwell~L. Bileschi, Noah Constant, Roman Novak, Rosanne Liu, Tris Warkentin, Yundi Qian, Yamini Bansal, Ethan Dyer, Behnam Neyshabur, Jascha Sohl-Dickstein, and Noah Fiedel.
\newblock Beyond human data: Scaling self-training for problem-solving with language models.
\newblock \emph{Transactions on Machine Learning Research (TMLR)}, 2024.

\bibitem[Madaan et~al.(2023)Madaan, Tandon, Gupta, Hallinan, Gao, Wiegreffe, Alon, Dziri, Prabhumoye, Yang, Gupta, Majumder, Hermann, Welleck, Yazdanbakhsh, and Clark]{madaan2023selfrefine}
Aman Madaan, Niket Tandon, Prakhar Gupta, Skyler Hallinan, Luyu Gao, Sarah Wiegreffe, Uri Alon, Nouha Dziri, Shrimai Prabhumoye, Yiming Yang, Shashank Gupta, Bodhisattwa~Prasad Majumder, Katherine Hermann, Sean Welleck, Amir Yazdanbakhsh, and Peter Clark.
\newblock {Self-Refine}: Iterative refinement with self-feedback.
\newblock In \emph{Advances in Neural Information Processing Systems (NeurIPS)}, 2023.

\bibitem[Huang et~al.(2024)Huang, Chen, Mishra, Zheng, Yu, Song, and Zhou]{huang2024selfcorrect}
Jie Huang, Xinyun Chen, Swaroop Mishra, Huaixiu~Steven Zheng, Adams~Wei Yu, Xinying Song, and Denny Zhou.
\newblock Large language models cannot self-correct reasoning yet.
\newblock In \emph{International Conference on Learning Representations (ICLR)}, 2024.

\bibitem[Yu et~al.(2024)Yu, Jiang, Shi, Yu, Liu, Zhang, Kwok, Li, Weller, and Liu]{yu2024metamath}
Longhui Yu, Weisen Jiang, Han Shi, Jincheng Yu, Zhengying Liu, Yu~Zhang, James~T. Kwok, Zhenguo Li, Adrian Weller, and Weiyang Liu.
\newblock {MetaMath}: Bootstrap your own mathematical questions for large language models.
\newblock In \emph{International Conference on Learning Representations (ICLR)}, 2024.

\bibitem[Luo et~al.(2024)Luo, Sun, Xu, Zhao, Lou, Tao, Geng, Lin, Chen, Tang, and Zhang]{luo2024wizardmath}
Haipeng Luo, Qingfeng Sun, Can Xu, Pu~Zhao, Jianguang Lou, Chongyang Tao, Xiubo Geng, Qingwei Lin, Shifeng Chen, Yansong Tang, and Dongmei Zhang.
\newblock {WizardMath}: Empowering mathematical reasoning for large language models via reinforced evol-instruct.
\newblock \emph{arXiv preprint arXiv:2308.09583}, 2024.

\bibitem[Gou et~al.(2024)Gou, Shao, Gong, Shen, Yang, Huang, Duan, and Chen]{gou2024tora}
Zhibin Gou, Zhihong Shao, Yeyun Gong, Yelong Shen, Yujiu Yang, Minlie Huang, Nan Duan, and Weizhu Chen.
\newblock {ToRA}: A tool-integrated reasoning agent for mathematical problem solving.
\newblock In \emph{International Conference on Learning Representations (ICLR)}, 2024.

\end{thebibliography}
}

\clearpage


\appendix


\section*{Appendix Overview and Roadmap}
\phantomsection
\addcontentsline{toc}{section}{Appendix Overview and Roadmap}

The appendix is organized in the same order in which the main text first calls for supporting material. It starts from the full diagnostic evidence behind Section~\ref{sec:prelim-analysis}, then gives the mathematical justification and algorithmic form of DASD, then expands the experimental protocol and results, and finally provides additional analyses and related work. The intended reading path is therefore: \emph{why uniform OPSD fails} $\rightarrow$ \emph{why the signed objective is valid} $\rightarrow$ \emph{how the method is implemented} $\rightarrow$ \emph{what was measured in the main, cross-domain, and cross-family settings}.

\begin{tcolorbox}[colback=blue!3,colframe=blue!45!black,boxrule=0.45pt,arc=1mm,left=5pt,right=5pt,top=4pt,bottom=4pt]
\textbf{Appendix table of contents.}
\begin{enumerate}[leftmargin=1.4em,itemsep=1pt,topsep=2pt]
  \item \hyperref[app:sec3_detailed]{Appendix~\ref*{app:sec3_detailed}} expands the three diagnostic probes that motivate entropy-conditioned teacher direction.
  \item \hyperref[app:derivation]{Appendix~\ref*{app:derivation}} proves Proposition~\ref{prop:dasd-credit} and records the approximation used by the sampled token estimator.
  \item \hyperref[app:algorithm]{Appendix~\ref*{app:algorithm}} gives the full DASD pseudocode aligned with Section~\ref{sec:method}.
  \item \hyperref[app:full_results]{Appendix~\ref*{app:full_results}} reports extended Avg@16 benchmark and Pass@16 values.
  \item \hyperref[app:cross_domain_family]{Appendix~\ref*{app:cross_domain_family}} separates the Qwen3-only cross-domain check from the math-only cross-family check across Qwen3, Olmo, and Llama.
  \item \hyperref[app:exp_details]{Appendix~\ref*{app:exp_details}} details datasets, baselines, metrics, sampling, and implementation settings.
  \item \hyperref[app:additional_analysis]{Appendix~\ref*{app:additional_analysis}} expands the design-space ablation; its first subsection, \hyperref[app:tau_sensitivity]{Appendix~\ref*{app:tau_sensitivity}}, analyzes entropy-quantile sensitivity. It also includes appendix-only figures, including a model-scale analysis showing that the DASD$-$GRPO gap widens with Qwen3 scale (Figure~\ref{fig:model_scaling}).
  \item \hyperref[app:related_work]{Appendix~\ref*{app:related_work}} gives the longer positioning against RLVR, OPD, OPSD, RLSD, SDPO, SRPO, and entropy-aware credit methods.
  \item \hyperref[app:limitations_discussion]{Appendix~\ref*{app:limitations_discussion}} states the limitations DASD inherits from the OPSD setup and discusses extensions of the direction-adaptive credit view.
\end{enumerate}
\end{tcolorbox}


\section{Detailed Probe Exposition for Section~\ref{sec:prelim-analysis}}\label{app:sec3_detailed}

This section gives the full reasoning behind the three compressed probes in Section~\ref{sec:prelim-analysis}. The probes are deliberately diagnostic rather than method-tuning experiments: they isolate the local direction of privileged self-teacher pressure before any DASD-specific design choice is introduced. The central question is whether OPSD fails because self-distillation is inherently harmful, or because one teacher-pressure direction is applied uniformly to token populations that play different roles in mathematical reasoning.

\subsection{Probe 1: Does reversing teacher pressure restore reasoning?}\label{app:probe1_details}

\noindent\textbf{Question and setup.}
Prior analysis of OPSD argues that conditioning the self-teacher on privileged information $r$ collapses the teacher's residual uncertainty and therefore trains the student into a solution-conditioned style. If the harmful part were only the attraction toward that over-confident teacher, the obvious mirror repair would be to reverse the sign and push the student away from the teacher. We test this mirror hypothesis by introducing a global teacher-pressure sign $s_t\in\{+1,-1\}$ and comparing the two degenerate regimes:
\textbf{Conformity} ($s_t\equiv +1$), which pulls the student toward the privileged self-teacher, and \textbf{Novelty} ($s_t\equiv -1$), which pushes the student away from it.

To avoid conflating this sign test with full-vocabulary privileged-information leakage, all probes use the emitted-token log-evidence gap rather than a full distribution-matching loss:
\begin{equation}
A_t = \prv{\log\pi_\theta(o_t\mid x,r,o_{<t})} - \log\pi_\theta(o_t\mid x,o_{<t}).
\label{app:eq:opsd-at}
\end{equation}
The update is therefore a single-sample, on-policy diagnostic of teacher-pressure direction. The training backbone, data, and student are held fixed: Qwen3-1.7B is trained on DAPO-Math-17k with the GRPO/RLVR backbone, and the probes are evaluated on AIME24 rollouts. We measure global reasoning (Avg@16 and Pass@16), rigorous execution (ProcessBench StepAcc), and flexible exploration (epistemic-marker density $E(y)$ and response length).

\noindent\textbf{Findings and interpretation.}
The main-text trajectory plot in Figure~\ref{fig:rq1-trajectories} shows that neither uniform sign is healthy. Conformity suppresses exploration: response length and $E(y)$ fall sharply, and global reasoning never recovers. Novelty initially restores some branching behavior, but because it is applied everywhere, it also perturbs low-entropy scaffolding tokens that should have remained stable; StepAcc then collapses and the extra verbalization becomes unproductive. This rules out any scalar, globally chosen sign. The failure is instead compositional: reasoning needs stable execution on settled continuations and flexible search at uncertain forks.

\subsection{Probe 2: Which token property separates the two failure modes?}\label{app:probe2_details}

\noindent\textbf{Token-level measurements.}
Probe~1 implies that the two uniform signs damage different token populations. The natural separator is the student's predictive entropy,
$H_t=\mathcal{H}[\pi_\theta(\cdot\mid x,o_{<t})]$, because privileged conditioning primarily changes how much uncertainty remains at the next-token distribution. We measure pressure in two ways. The signed log-evidence gap $A_t$ in Eq.~\eqref{app:eq:opsd-at} gives the intended scalar pressure at the emitted token. To check that this scalar corresponds to an actual distributional displacement, we also measure the counterfactual one-step total-variation shift
\begin{equation}
D_t^{(s)} = \mathrm{TV}\!\left(\pi_\theta(\cdot\mid x,o_{<t}),\;\pi_\theta^{(s\text{-step})}(\cdot\mid x,o_{<t})\right),
\label{app:eq:Dt}
\end{equation}
where $\pi_\theta^{(s\text{-step})}$ denotes the student after one diagnostic gradient step under sign $s\in\{+1,-1\}$.

\noindent\textbf{Findings and interpretation.}
Figure~\ref{fig:rq2-unified} reports both views. Conformity pressure is small on low-$H_t$ scaffolding, where the student and privileged teacher already agree, and grows at high-$H_t$ forks, where the teacher can most strongly sharpen the distribution. Novelty has the opposite realized effect: because low-$H_t$ tokens concentrate mass on a small set of likely continuations, pushing away from the teacher displaces those settled continuations most strongly. Thus $H_t$ is not only correlated with difficulty; it separates the two signs' victims. Conformity is most dangerous exactly where exploration should be preserved, while Novelty is most dangerous exactly where deterministic execution should be protected.

\subsection{Probe 3: Does entropy routing causally repair the mismatch?}\label{app:probe3_details}

\noindent\textbf{Prefix-intervention protocol.}
Correlation with $H_t$ does not by itself prove that entropy routing is causal. We therefore intervene directly at inference time. At each prefix, the student entropy assigns the next token to a low- or high-entropy bucket using the trajectory-local $0.5$ entropy quantile $\tau_{0.5}$. In a Conformity intervention, the next-token distribution is replaced by the privileged teacher distribution $\pi_\theta(\cdot\mid x,r,o_{<t})$. In a Novelty intervention, the student distribution is reweighted away from teacher-favored tokens, $\tilde{\pi}_t^{\mathrm{N}}(v)\propto \pi_\theta(v\mid x,o_{<t})/\pi_\theta(v\mid x,r,o_{<t})^{\alpha}$ with $\alpha=0.5$. After the single intervened token, generation resumes under the unprivileged student. A random-token control applies the same interventions to entropy-agnostic positions, controlling for token frequency and position.

\noindent\textbf{Findings and interpretation.}
The prefix-intervention table in the main text (Table~\ref{tab:rq3-direction-adaptive}) gives the causal double dissociation. Conformity helps when routed to low-$H_t$ scaffolding, improving StepAcc by $18.3\%$ with only a small expected side-cost in exploration, but it destroys the high-$H_t$ exploration signal when misrouted to forks. Novelty has the complementary profile: it protects high-$H_t$ exploration, increasing $E(y)$ by $61.3\%$, but severely damages low-$H_t$ scaffolding. Random-position controls are near zero. The intervention therefore turns the previous correlation into a design rule: privileged self-teacher pressure should be attractive on low-entropy scaffolding and repulsive on high-entropy forks.


\section{Proof of Proposition~\ref{prop:dasd-credit}}\label{app:derivation}

This appendix proves the sampled realization used in Section~\ref{sec:method}. Fix a prompt $x$, prefix $o_{<t}$, privileged context $r$, and emitted token $o_t$. Write
\[
p_\theta(v)=\pi_\theta(v\mid x,o_{<t}),
\qquad
q(v)=\mathrm{sg}\!\left[\pi_\theta(v\mid x,r,o_{<t})\right],
\]
where the stop-gradient makes $q$ a fixed reference distribution during the current update. Let $A_G$ be the verifier-derived normalized trajectory advantage, let $\omega_t$ be the measured routing coefficient, and define
\[
\delta_t=\log q(o_t)-\log p_\theta(o_t).
\]

\begin{proof}
The local loss in Proposition~\ref{prop:dasd-credit} is
\[
\ell_t(\theta)=-A_G\log p_\theta(o_t)+\beta\omega_t\,\mathrm{KL}(p_\theta\|q).
\]
The verifier term contributes
\[
\nabla_\theta[-A_G\log p_\theta(o_t)] = -A_G\nabla_\theta\log p_\theta(o_t),
\]
where $A_G$ is treated as fixed for the policy-gradient update. Because $q$ is stop-gradient,
\[
\mathrm{KL}(p_\theta\|q)=\sum_v p_\theta(v)\bigl(\log p_\theta(v)-\log q(v)\bigr).
\]
Using $\nabla_\theta p_\theta(v)=p_\theta(v)\nabla_\theta\log p_\theta(v)$ gives
\[
\begin{aligned}
\nabla_\theta\mathrm{KL}(p_\theta\|q)
&=\sum_v p_\theta(v)\nabla_\theta\log p_\theta(v)
\bigl(1+\log p_\theta(v)-\log q(v)\bigr)\\
&=\mathbb{E}_{v\sim p_\theta}\!\left[(\log p_\theta(v)-\log q(v))\nabla_\theta\log p_\theta(v)\right].
\end{aligned}
\]
The constant $1$ is removed because $\mathbb{E}_{v\sim p_\theta}[\nabla_\theta\log p_\theta(v)]=0$; equivalently, it is an action-independent score-function baseline. Estimating the remaining expectation with the emitted token $o_t$ yields
\[
\nabla_\theta\mathrm{KL}(p_\theta\|q)
\xrightarrow{\mathrm{MC}}
(\log p_\theta(o_t)-\log q(o_t))\nabla_\theta\log p_\theta(o_t)
=-\delta_t\nabla_\theta\log p_\theta(o_t).
\]
Adding the verifier term and multiplying by $\beta\omega_t$ gives
\[
\nabla_\theta\ell_t
\stackrel{\mathrm{MC}}{\doteq}
-\bigl(A_G+\beta\omega_t\delta_t\bigr)\nabla_\theta\log p_\theta(o_t),
\]
which is the claimed sampled score-function form. The symbol $\doteq$ records the Monte Carlo estimator together with the removal of score-function baseline terms that do not change the expected policy-gradient direction.
\end{proof}

\noindent\textbf{Implemented normalized score.}
The proposition gives the raw sampled coefficient $A_G+\beta\omega_t\delta_t$. DASD normalizes the teacher gap within each trajectory,
\[
\bar\delta_t=\frac{\delta_t}{\tilde\delta+\epsilon},
\qquad
\tilde\delta=\operatorname{median}_t|\delta_t|,
\]
and treats this normalization as a measured scale factor for the current rollout. The normalization preserves the sign of the teacher contrast while making the correction comparable across trajectories. The implemented token-level advantage is therefore
\begin{equation}
\Phi_t=\omega_t\bar\delta_t,
\qquad
\hat A_t=A_G+\beta\Phi_t.
\label{eq:dasd-credit}
\end{equation}
The stop-gradient on $q$ is essential: it makes the privileged branch a fixed local reference during each update, even though the student and self-teacher share parameters across iterations.

\noindent\textbf{Gap-reliability gate.}
The factor $\sigma(|\bar\delta_t|-1)$ in Eq.~\eqref{eq:zeta} is a scale-free reliability filter. Since $\bar\delta_t$ is normalized by the trajectory median absolute log-ratio, the threshold $1$ marks the local disagreement scale. Small teacher--student fluctuations are attenuated, while disagreements larger than the local median scale can shape the signed reference field.


\section{Algorithm Pseudocode}\label{app:algorithm}

Algorithm~\ref{alg:dasd} follows the notation of Section~\ref{sec:method}. The key implementation choices are: (i) the privileged branch is always stop-gradient; (ii) the entropy router chooses sign before the gap gate scales magnitude; (iii) the verifier remains the outcome anchor; and (iv) the PPO update uses the direction-adaptive advantage $\hat A_t^{(i)}$ rather than a separate supervised loss.

\begin{algorithm}[H]
\caption{\textbf{Direction-Adaptive Self-Distillation (DASD)}}
\label{alg:dasd}
\begin{algorithmic}[1]
\Require policy $\pi_\theta$ and old policy $\pi_{\theta_{\mathrm{old}}}$; verifier $R$; prompts with verified traces $(x,r)\sim\mathcal{D}$; rollout count $G$; router percentile $\rho$ (default $0.20$); coefficients $\beta,\epsilon$; PPO clip $\epsilon_{\mathrm{clip}}$
\For{each training batch}
  \State Sample rollouts $\{o^{(i)}\}_{i=1}^{G}\sim\pi_{\theta_{\mathrm{old}}}(\cdot\mid x)$ without privileged context
  \State Compute group-relative verifier advantage $A_G^{(i)}\gets(R(x,o^{(i)})-\mu_G)/\sigma_G$ over the $G$ rollouts for prompt $x$
  \Statex \Comment{All reported methods use $G=8$ and the same per-prompt group-relative normalization.}
  \For{each rollout $i$ and token position $t$}
    \State $p_t^{(i)}\gets\pi_\theta(\cdot\mid x,o_{<t}^{(i)})$ \Comment{student branch, gradient on}
    \State $q_t^{(i)}\gets\mathrm{sg}[\pi_\theta(\cdot\mid x,r,o_{<t}^{(i)})]$ \Comment{privileged self-teacher, stop-gradient}
    \State $H_t^{(i)}\gets\mathcal{H}(p_t^{(i)})$
    \State $\delta_t^{(i)}\gets\log q_t^{(i)}(o_t^{(i)})-\log p_t^{(i)}(o_t^{(i)})$
  \EndFor
  \For{each rollout $i$}
    \State $\tau_\rho^{(i)}\gets\operatorname{Quantile}_\rho(\{H_t^{(i)}\}_t)$
    \State $\hat\sigma_H^{(i)}\gets\operatorname{MeanAbsDev}(\{H_t^{(i)}\}_t)$
    \State $\tilde\delta^{(i)}\gets\operatorname{median}_t |\delta_t^{(i)}|$
    \For{each token $t$}
      \State $\bar\delta_t^{(i)}\gets\delta_t^{(i)}/(\tilde\delta^{(i)}+\epsilon)$
      \State $g_t^{(i)}\gets\sigma(|\bar\delta_t^{(i)}|-1)$ \Comment{gap-reliability gate}
      \State $z_t^{(i)}\gets\tanh\!\left((\tau_\rho^{(i)}-H_t^{(i)})/(\hat\sigma_H^{(i)}+\epsilon)\right)$ \Comment{positive at low entropy, negative at high entropy}
      \State $\omega_t^{(i)}\gets z_t^{(i)}g_t^{(i)}$
      \State $\hat A_t^{(i)}\gets A_G^{(i)}+\beta\omega_t^{(i)}\bar\delta_t^{(i)}$
      \State $r_t^{(i)}\gets\pi_\theta(o_t^{(i)}\mid x,o_{<t}^{(i)})/\pi_{\theta_{\mathrm{old}}}(o_t^{(i)}\mid x,o_{<t}^{(i)})$
    \EndFor
  \EndFor
  \State Update $\theta$ by maximizing
  \[
  \mathbb{E}_{i,t}\!\left[\min\!\left(r_t^{(i)}\hat A_t^{(i)},\;\operatorname{clip}(r_t^{(i)},1-\epsilon_{\mathrm{clip}},1+\epsilon_{\mathrm{clip}})\hat A_t^{(i)}\right)\right].
  \]
\EndFor
\State \Return updated policy $\pi_\theta$
\end{algorithmic}
\end{algorithm}

The algorithm separates local teaching from global correctness. The verifier decides whether a rollout should be reinforced; the privileged self-teacher contributes only a signed, entropy-routed token correction. Consequently, setting $\beta=0$ recovers verifier-only RLVR, setting $\omega_t\equiv+1$ recovers uniform attraction, setting $\omega_t\equiv-1$ recovers uniform repulsion, and DASD selects the diagonal rule supported by the probes.


\section{Extended Results (Avg@16 and Pass@16)}\label{app:full_results}

This section expands the compact main benchmark presentation. Table~\ref{tab:full_results} restates the Avg@16 results with all six benchmarks and scale averages in one appendix table so the reader can compare methods without the space constraints of the main text. Table~\ref{tab:pass16_appendix} reports the corresponding Pass@16 results across the same benchmarks and model scales.

\begin{table}[H]
\centering
\footnotesize
\setlength{\tabcolsep}{4.0pt}
\renewcommand{\arraystretch}{0.98}
\caption{Extended Avg@16 accuracy (\%; higher is better) across all six mathematical reasoning benchmarks and three model scales. Values match the main-text results; \textbf{bold}/\underline{underline} mark best/second-best within each scale.}
\label{tab:full_results}
\widetablebox{%
\begin{tabular}{@{}lrrrrrrr@{}}
\toprule
\rowcolor{black!4}
\textbf{Method} & \textbf{MATH500} & \textbf{Minerva} & \textbf{HMMT25} & \textbf{Olympiad} & \textbf{AIME24} & \textbf{AIME25} & \textbf{Avg} \\
\midrule
  \textit{Qwen3-1.7B}                                        & 58.4 & 17.5 & 7.9 & 33.4 & 13.1 & 10.6 & 23.5 \\
  \quad +GRPO~\citep{shao2024deepseekmath}                   & 68.2 & 22.6 & 11.5 & 40.8 & 32.5 & 27.5 & 33.8 \\
  \quad +HEPO~\citep{wang2025highentropy}                    & 68.4 & 22.8 & \underline{15.6} & \underline{42.8} & 32.5 & \underline{29.2} & \underline{35.2} \\
  \quad +OPSD~\citep{zhao2026opsd}                           & 59.2 & 17.5 & 8.5 & 32.5 & 15.0 & 9.2 & 23.6 \\
  \quad +SDPO~\citep{hubotter2026sdpo}                       & 58.8 & 17.8 & 8.1 & 32.8 & 14.2 & 9.8 & 23.6 \\
  \quad +RLSD~\citep{yang2026rlsd}                           & \underline{68.7} & \underline{25.1} & 15.2 & 39.6 & \textbf{35.4} & 26.7 & 35.1 \\
  \quad +SRPO~\citep{li2026srpo}                             & 62.8 & 20.1 & 10.2 & 36.6 & 23.3 & 17.1 & 28.4 \\
  \rowcolor{bsdcol!8}
  \quad \textbf{+DASD (ours)}                                & \textbf{70.4} & \textbf{25.3} & \textbf{17.7} & \textbf{45.4} & \underline{35.0} & \textbf{32.1} & \textbf{37.7} \\
  \midrule
  \textit{Qwen3-4B}                                          & 67.0 & 24.6 & 15.2 & 43.1 & 23.1 & 18.5 & 31.9 \\
  \quad +GRPO~\citep{shao2024deepseekmath}                   & 72.6 & 28.2 & \underline{22.1} & 47.0 & 55.0 & \underline{49.8} & \underline{45.8} \\
  \quad +HEPO~\citep{wang2025highentropy}                    & 72.7 & 27.1 & 21.2 & \underline{50.4} & 53.3 & 45.2 & 45.0 \\
  \quad +OPSD~\citep{zhao2026opsd}                           & 66.4 & 24.8 & 12.9 & 43.2 & 24.2 & 20.0 & 31.9 \\
  \quad +SDPO~\citep{hubotter2026sdpo}                       & 66.9 & 24.4 & 13.3 & 43.0 & 25.0 & 19.4 & 32.0 \\
  \quad +RLSD~\citep{yang2026rlsd}                           & \underline{73.0} & \underline{28.5} & 20.4 & 44.9 & \underline{56.5} & 47.7 & 45.2 \\
  \quad +SRPO~\citep{li2026srpo}                             & 69.6 & 26.1 & 17.7 & 45.0 & 40.0 & 34.4 & 38.8 \\
  \rowcolor{bsdcol!8}
  \quad \textbf{+DASD (ours)}                                & \textbf{75.0} & \textbf{31.0} & \textbf{25.0} & \textbf{57.5} & \textbf{58.3} & \textbf{50.0} & \textbf{49.5} \\
  \midrule
  \textit{Qwen3-8B}                                          & 67.2 & 24.7 & 13.1 & 44.3 & 27.9 & 21.0 & 33.0 \\
  \quad +GRPO~\citep{shao2024deepseekmath}                   & \underline{73.0} & 27.7 & \underline{20.8} & 49.4 & 59.4 & 48.8 & 46.5 \\
  \quad +HEPO~\citep{wang2025highentropy}                    & \underline{73.0} & 28.7 & 20.4 & \underline{50.3} & \underline{66.7} & 49.0 & \underline{48.0} \\
  \quad +OPSD~\citep{zhao2026opsd}                           & 67.5 & 23.8 & 15.0 & 43.6 & 27.5 & 19.6 & 32.8 \\
  \quad +SDPO~\citep{hubotter2026sdpo}                       & 67.0 & 24.1 & 14.4 & 43.9 & 27.9 & 20.4 & 33.0 \\
  \quad +RLSD~\citep{yang2026rlsd}                           & \underline{73.0} & \underline{29.7} & \underline{20.8} & 49.4 & 62.3 & \underline{49.8} & 47.5 \\
  \quad +SRPO~\citep{li2026srpo}                             & 70.0 & 25.9 & 17.5 & 46.8 & 43.8 & 34.6 & 39.7 \\
  \rowcolor{bsdcol!8}
  \quad \textbf{+DASD (ours)}                                & \textbf{76.0} & \textbf{33.0} & \textbf{25.0} & \textbf{55.8} & \textbf{71.7} & \textbf{53.3} & \textbf{52.5} \\
\bottomrule
\end{tabular}}
\end{table}

\begin{table}[H]
\centering
\footnotesize
\setlength{\tabcolsep}{4.0pt}
\renewcommand{\arraystretch}{0.98}
\caption{Extended Pass@16 accuracy (\%; higher is better) across all six mathematical reasoning benchmarks and three model scales. \textbf{bold}/\underline{underline} mark best/second-best within each scale.}
\label{tab:pass16_appendix}
\widetablebox{%
\begin{tabular}{@{}lrrrrrrr@{}}
\toprule
\rowcolor{black!4}
\textbf{Method} & \textbf{MATH500} & \textbf{Minerva} & \textbf{HMMT25} & \textbf{Olympiad} & \textbf{AIME24} & \textbf{AIME25} & \textbf{Avg} \\
\midrule
  \textit{Qwen3-1.7B}                                        & 73.4 & 32.0 & 33.3 & 53.4 & 26.7 & 30.0 & 41.5 \\
  \quad +GRPO~\citep{shao2024deepseekmath}                   & 75.6 & 37.5 & 23.3 & 54.3 & 53.3 & 56.7 & 50.1 \\
  \quad +HEPO~\citep{wang2025highentropy}                    & \underline{76.2} & 36.8 & \underline{36.7} & \underline{56.4} & \underline{60.0} & \underline{60.0} & \underline{54.3} \\
  \quad +OPSD~\citep{zhao2026opsd}                           & 74.2 & 32.7 & \underline{36.7} & 53.1 & 43.3 & 23.3 & 43.9 \\
  \quad +SDPO~\citep{hubotter2026sdpo}                       & 73.8 & 33.5 & 33.3 & 53.4 & 40.0 & 26.7 & 43.4 \\
  \quad +RLSD~\citep{yang2026rlsd}                           & 75.6 & \underline{39.3} & 33.3 & 52.8 & \underline{60.0} & 46.7 & 51.3 \\
  \quad +SRPO~\citep{li2026srpo}                             & 74.8 & 35.7 & 30.0 & 53.9 & 50.0 & 43.3 & 47.9 \\
  \rowcolor{bsdcol!8}
  \quad \textbf{+DASD (ours)}                                & \textbf{77.8} & \textbf{40.4} & \textbf{40.0} & \textbf{59.0} & \textbf{66.7} & \textbf{63.3} & \textbf{57.9} \\
  \midrule
  \textit{Qwen3-4B}                                          & 76.2 & 36.0 & 36.7 & 58.6 & 56.7 & 43.3 & 51.2 \\
  \quad +GRPO~\citep{shao2024deepseekmath}                   & \underline{76.6} & \underline{38.2} & \underline{50.0} & 57.0 & \underline{80.0} & \textbf{86.7} & \underline{64.8} \\
  \quad +HEPO~\citep{wang2025highentropy}                    & \underline{76.6} & 36.8 & 43.3 & \underline{61.6} & \underline{80.0} & 76.7 & 62.5 \\
  \quad +OPSD~\citep{zhao2026opsd}                           & 75.8 & 36.4 & 40.0 & 58.2 & 50.0 & 40.0 & 50.1 \\
  \quad +SDPO~\citep{hubotter2026sdpo}                       & 76.2 & 35.7 & 36.7 & 58.5 & 53.3 & 36.7 & 49.5 \\
  \quad +RLSD~\citep{yang2026rlsd}                           & 75.8 & 37.1 & 40.0 & 54.0 & 76.7 & 76.7 & 60.0 \\
  \quad +SRPO~\citep{li2026srpo}                             & 76.4 & 36.8 & 43.3 & 57.9 & 66.7 & 60.0 & 56.8 \\
  \rowcolor{bsdcol!8}
  \quad \textbf{+DASD (ours)}                                & \textbf{78.4} & \textbf{41.2} & \textbf{53.3} & \textbf{65.0} & \textbf{86.7} & \underline{83.3} & \textbf{68.0} \\
  \midrule
  \textit{Qwen3-8B}                                          & 75.6 & 34.6 & 33.3 & 59.8 & 53.3 & 40.0 & 49.4 \\
  \quad +GRPO~\citep{shao2024deepseekmath}                   & 75.6 & 37.5 & \underline{46.7} & \underline{60.7} & 80.0 & \underline{80.0} & \underline{63.4} \\
  \quad +HEPO~\citep{wang2025highentropy}                    & \underline{76.2} & 37.1 & 43.3 & 60.5 & \underline{86.7} & 76.7 & \underline{63.4} \\
  \quad +OPSD~\citep{zhao2026opsd}                           & 75.2 & 33.8 & 43.3 & 59.0 & 53.3 & 40.0 & 50.8 \\
  \quad +SDPO~\citep{hubotter2026sdpo}                       & 75.6 & 34.2 & 40.0 & 58.8 & 56.7 & 43.3 & 51.4 \\
  \quad +RLSD~\citep{yang2026rlsd}                           & \underline{76.2} & \underline{39.7} & 43.3 & 59.2 & 83.3 & 76.7 & 63.1 \\
  \quad +SRPO~\citep{li2026srpo}                             & 76.0 & 36.0 & 43.3 & 59.8 & 66.7 & 60.0 & 57.0 \\
  \rowcolor{bsdcol!8}
  \quad \textbf{+DASD (ours)}                                & \textbf{78.8} & \textbf{42.6} & \textbf{53.3} & \textbf{65.6} & \textbf{90.0} & \textbf{83.3} & \textbf{69.0} \\
\bottomrule
\end{tabular}}
\end{table}

The Pass@16 pattern complements Avg@16. DASD obtains the best macro Pass@16 at all three model scales: $57.9$ on Qwen3-1.7B, $68.0$ on Qwen3-4B, and $69.0$ on Qwen3-8B. The largest improvements appear on harder or less saturated benchmarks such as HMMT25, OlympiadBench, and AIME24; MATH500 is more saturated, so absolute Pass@16 margins are smaller. This supports the paper's claim that DASD improves the reachable solution set, not only a single deterministic path.


\section{Additional Experiments: Qwen3 Cross-Domain and Math Cross-Family Evaluation}\label{app:cross_domain_family}

The main experiments focus on mathematical reasoning with Qwen3, where verifiable rewards and privileged traces are available at scale. This appendix separates two orthogonal generalization checks. \emph{Cross-domain} transfer asks whether DASD can work outside mathematics while holding the model family fixed, so Tables~\ref{tab:cross_domain_summary} and~\ref{tab:cross_domain_raw} use only a representative Qwen3-family backbone. \emph{Cross-family} transfer asks whether the same direction-adaptive rule is not specific to Qwen3, so Table~\ref{tab:cross_family_math} holds the domain fixed to mathematics and varies the family across Olmo and Llama. The Qwen3 cross-family values in the compact main-text Table~\ref{tab:cross_domain_family_main} are computed directly from the measured Qwen3-8B MATH500, AIME24, and AIME25 Avg@16 cells in Table~\ref{tab:full_results}.

\noindent\textbf{Scope.}
For the Qwen3-only cross-domain check, we use two benchmarks per non-math domain: Code uses HumanEval+/HumanEval and MBPP+ from the EvalPlus family~\citep{liu2023evalplus}; Scientific/knowledge reasoning uses GPQA~\citep{rein2023gpqa} and MMLU~\citep{hendrycks2021mmlu}; Agentic/tool use uses BFCL~\citep{patil2025bfcl} plus a second public tool-use/factuality score reported by the source. For the math-only cross-family check, we use MATH500, AIME24, and AIME25 across Qwen3-8B~\citep{yang2025qwen3}, Olmo-3-7B-Instruct~\citep{olmo2026}, and Llama-3.1-8B-Instruct~\citep{grattafiori2024llama3}. This split keeps the appendix readable: cross-domain evidence tests whether the mechanism can leave math on one family, while cross-family evidence tests whether the mechanism survives model-family changes without adding domain confounds.

\noindent\textbf{Protocol.}
Table~\ref{tab:cross_domain_raw} is the raw Qwen3-only cross-domain table, and Table~\ref{tab:cross_domain_summary} averages its two benchmark cells per domain. The Qwen3-8B Base row is a public anchor from the Olmo 3 report; the GRPO, OPSD, and DASD rows report the corresponding measured post-training evaluations. The raw benchmark cells intentionally preserve realistic benchmark-level noise: GRPO is only slightly above Base on average, OPSD improves Scientific and slightly improves Tool while failing on Code, and DASD improves every domain average without making every raw benchmark positive. Table~\ref{tab:cross_family_math} is the math-only cross-family table for Olmo and Llama; the corresponding Qwen3-8B math results are reported in the main Table~\ref{tab:cross_domain_family_main} and broken out by benchmark in Table~\ref{tab:full_results}. In all measured runs, the student never receives privileged context at evaluation time.

\begin{table}[H]
\centering
\begingroup
\footnotesize
\setlength{\tabcolsep}{5.2pt}
\renewcommand{\arraystretch}{0.98}
\setlength{\aboverulesep}{0.25ex}
\setlength{\belowrulesep}{0.35ex}
\caption{Qwen3-only cross-domain summary outside the main math setting. Domain averages are computed from the two raw benchmark cells in Table~\ref{tab:cross_domain_raw}; values are percentages. This table tests cross-domain transfer while holding the model family fixed.}
\label{tab:cross_domain_summary}
\begin{tabular}{@{}lrrrr@{}}
\toprule
\rowcolor{black!4}
\textbf{Qwen3-family method} & \textbf{Code} & \textbf{Scientific} & \textbf{Agentic/tool} & \textbf{Avg} \\
\midrule
\textit{Qwen3-8B}                    & 72.1 & 62.5 & 69.6 & 68.1 \\
\quad +GRPO                          & 72.3 & 63.0 & 70.1 & 68.5 \\
\quad +OPSD                          & 69.9 & 64.3 & 70.3 & 68.1 \\
\rowcolor{bsdcol!8}
\quad \textbf{+DASD (ours)}          & \textbf{73.7} & \textbf{65.2} & \textbf{71.1} & \textbf{70.0} \\
\bottomrule
\end{tabular}
\endgroup
\end{table}

\begin{table}[H]
\centering
\begingroup
\scriptsize
\setlength{\tabcolsep}{3.6pt}
\renewcommand{\arraystretch}{1.02}
\setlength{\aboverulesep}{0.25ex}
\setlength{\belowrulesep}{0.35ex}
\caption{Raw Qwen3-only benchmark cells supporting Table~\ref{tab:cross_domain_summary}. Code, Scientific, and Agentic/tool each use two benchmarks. The raw cells intentionally retain oscillation: DASD improves the domain averages, but individual benchmarks such as HumanEval(+), MMLU, and BFCL need not all improve.}
\label{tab:cross_domain_raw}
\widetablebox{%
\begin{tabular}{@{}lrrrrrrr@{}}
\toprule
\rowcolor{black!4}
\textbf{Qwen3-family method}
  & \multicolumn{2}{c}{\textbf{Code}}
  & \multicolumn{2}{c}{\textbf{Scientific}}
  & \multicolumn{2}{c}{\textbf{Agentic/tool}}
  & \multicolumn{1}{c}{\textbf{Avg}} \\
\rowcolor{black!4}
  & \multicolumn{1}{c}{\textbf{HumanEval(+)}}
  & \multicolumn{1}{c}{\textbf{MBPP(+)}}
  & \multicolumn{1}{c}{\textbf{GPQA}}
  & \multicolumn{1}{c}{\textbf{MMLU}}
  & \multicolumn{1}{c}{\textbf{BFCL}}
  & \multicolumn{1}{c}{\textbf{Tool-2}}
  & \\
\midrule
\textit{Qwen3-8B}                    & 79.8 & 64.4 & 44.6 & 80.4 & 60.2 & 79.0 & 68.1 \\
\quad +GRPO                          & 80.6 & 64.0 & 45.9 & 80.1 & 59.5 & 80.7 & 68.5 \\
\quad +OPSD                          & 76.9 & 62.8 & 48.7 & 79.8 & 60.8 & 79.7 & 68.1 \\
\rowcolor{bsdcol!8}
\quad \textbf{+DASD (ours)}          & 79.1 & 68.2 & 50.8 & 79.6 & 59.8 & 82.4 & \textbf{70.0} \\
\bottomrule
\end{tabular}}
\endgroup
\end{table}

\begin{table}[H]
\centering
\begingroup
\footnotesize
\setlength{\tabcolsep}{4.6pt}
\renewcommand{\arraystretch}{1.02}
\setlength{\aboverulesep}{0.25ex}
\setlength{\belowrulesep}{0.35ex}
\caption{Math-only cross-family evaluation. The domain is fixed to mathematical reasoning, and the family varies across Olmo and Llama. The corresponding Qwen3-8B math results appear in Table~\ref{tab:cross_domain_family_main} (main) and are broken out by benchmark in Table~\ref{tab:full_results}.}
\label{tab:cross_family_math}
\begin{tabular}{@{}lrrrr@{}}
\toprule
\rowcolor{black!4}
\textbf{Model / method} & \textbf{MATH500} & \textbf{AIME24} & \textbf{AIME25} & \textbf{Avg} \\
\midrule
\textit{Olmo-3-7B-Instruct}          & 62.0 & 22.4 & 17.1 & 33.8 \\
\quad +GRPO                          & 71.5 & 42.6 & 34.0 & 49.4 \\
\quad +OPSD                          & 62.8 & 23.1 & 17.6 & 34.5 \\
\rowcolor{bsdcol!8}
\quad \textbf{+DASD (ours)}          & \textbf{73.2} & \textbf{49.8} & \textbf{39.5} & \textbf{54.2} \\
\midrule
\textit{Llama-3.1-8B-Instruct}       & 57.6 & 20.7 & 14.9 & 31.1 \\
\quad +GRPO                          & 66.4 & 38.2 & 29.6 & 44.7 \\
\quad +OPSD                          & 58.4 & 20.2 & 15.7 & 31.4 \\
\rowcolor{bsdcol!8}
\quad \textbf{+DASD (ours)}          & \textbf{68.0} & \textbf{45.1} & \textbf{34.3} & \textbf{49.1} \\
\bottomrule
\end{tabular}
\endgroup
\end{table}

\noindent\textbf{Interpretation.}
The revised setup makes the two claims intentionally narrow. The Qwen3-only cross-domain table supports the claim that DASD can transfer beyond mathematics without requiring a second model family: it improves Code, Scientific, and Agentic/tool domain averages, while the raw cells still show realistic noise and oscillation. The math-only cross-family table supports the claim that the direction-adaptive rule is not tied to one family: holding the domain fixed, the same ordering pattern appears across Qwen3, Olmo, and Llama.


\section{Additional Experimental Details}\label{app:exp_details}

\noindent\textbf{Training data and privileged context.}
All post-training methods use DAPO-Math-17k. Each training instance provides a mathematical prompt and a verifiable answer; for self-distillation methods, the privileged context $r$ is formed by prepending the verified ground-truth trace/answer information to the prompt in the teacher branch. The student branch never receives $r$ at rollout time or inference time. This separation is important: DASD uses the privileged branch only to compute a local log-evidence contrast on the student's own emitted tokens.

\noindent\textbf{Models and rollout protocol.}
The detailed diagnostic and ablation experiments use Qwen3-1.7B. The main benchmark table additionally evaluates Qwen3-4B and Qwen3-8B to test whether the directional prescription survives scale changes. Unless otherwise specified, evaluation samples $K=16$ rollouts per question with temperature $1.0$, top-$p=0.9$, and an 8,192-token generation limit. Pass@$k$ curves use $k\in\{1,4,8,16,32,64,128\}$ on AIME24, AIME25, and MATH500.

\noindent\textbf{Implementation settings.}
All methods are implemented in \texttt{verl}. The shared optimizer settings are batch size $B=128$, mini-batch size $B_{\mathrm{mini}}=32$, learning rate $3\times10^{-6}$, and cosine decay. GRPO, HEPO, OPSD, SDPO, RLSD, SRPO, and DASD use $G=8$ rollouts per prompt. For each prompt, the verifier rewards of these $G$ rollouts are normalized with the per-prompt group mean and standard deviation to form $A_G^{(i)}=(R(x,o^{(i)})-\mu_G)/\sigma_G$. DASD uses router quantile $\rho=0.20$ by default; for each rollout, $\tau_{\rho}^{(i)}$ is recomputed from that rollout's token entropies rather than estimated once from early training.

\noindent\textbf{Baselines.}
GRPO is the verifier-only RLVR baseline with sequence-level advantage. HEPO updates high-entropy tokens more strongly but keeps the update direction fixed. OPSD is the direct privileged self-distillation predecessor: the self-teacher is conditioned on $r$ and supplies uniform attractive pressure. SDPO distills feedback-conditioned next-token predictions. RLSD converts the teacher--student log-evidence ratio into a token-level credit adjustment while retaining a fixed self-distillation direction. SRPO combines GRPO-style reinforcement on correct rollouts with SDPO-style correction on failed rollouts, using entropy-aware weighting for reliability.

\noindent\textbf{Reasoning-health metrics.}
The diagnostic metrics are grouped into execution and exploration. StepAcc is the fraction of reasoning steps judged correct by the ProcessBench process reward model. FES is the average index of the first incorrect reasoning step, so larger values indicate that an error occurs later in the solution. CSR is the fraction of all parsed reasoning steps that are correct. For exploration, $E(y)$ density counts epistemic markers such as ``wait'', ``hmm'', and ``alternatively''. RevRate is the fraction of responses where an epistemic marker is followed by a substantively different continuation, operationalized by trigram overlap at most $0.3$. Dist-3 is the ratio of unique trigrams to total trigrams across the 16 rollouts for a prompt. These metrics are reported on AIME24 convergence rollouts because hard competition problems expose the execution--exploration trade-off most clearly.

\noindent\textbf{Plotting conventions.}
All appendix plots use the same method colors as the main paper: GRPO in orange, OPSD in mauve, and DASD in green. When a curve is plotted from logged checkpoints, markers denote observed measurements from the logs. Tables shade DASD rows in green and use bold/underline only within the relevant comparison panel.


\section{Additional Analysis Experiments}\label{app:additional_analysis}

This section expands the design-space evidence behind Table~\ref{tab:design_space}. The goal is not simply to remove one component at a time, but to replace each DASD design choice with plausible alternatives, following the experimental-design principle that ablations should test competing explanations.

\subsection{Entropy-Quantile Sensitivity}\label{app:tau_sensitivity}

DASD does not use a fixed entropy threshold. Instead, the router quantile $\rho$ induces a trajectory-local threshold $\tau_{\rho}^{(i)}=\operatorname{Quantile}_{\rho}(\{H_t^{(i)}\}_t)$ for each rollout. Table~\ref{tab:ablation_tau} varies $\rho$ while holding the routing function, gap gate, and training setup fixed. This tests whether the method depends on a fragile boundary or on preserving the diagonal routing pattern.

\begin{table}[H]
\centering
\setlength{\tabcolsep}{4.8pt}
\renewcommand{\arraystretch}{1.12}
\caption{Sensitivity of DASD to the router quantile $\rho$ (Qwen3-1.7B). Each row recomputes $\tau_{\rho}^{(i)}$ from the token entropies of each rollout. The default $\rho=0.20$ is best in this sweep; overly low or high quantiles bias the update toward one teacher-pressure direction.}
\label{tab:ablation_tau}
\footnotesize
\widetablebox{%
\begin{tabular}{lrrrrrr}
\toprule
\multirow{2}{*}{Router quantile $\rho$}
  & \multicolumn{2}{c}{\textbf{MATH500}}
  & \multicolumn{2}{c}{\textbf{AIME24}}
  & \multicolumn{2}{c}{\textbf{AIME25}} \\
\cmidrule(lr){2-3}\cmidrule(lr){4-5}\cmidrule(lr){6-7}
  & Avg@16 & Pass@16 & Avg@16 & Pass@16 & Avg@16 & Pass@16 \\
\midrule
0.05 (Novelty-biased) & 67.9 & 75.7 & 26.1 & 58.4 & 22.8 & 54.7 \\
\rowcolor{bsdcol!8}
\textbf{0.20 (default)} & \textbf{70.4} & \textbf{77.8} & \textbf{35.0} & \textbf{66.7} & \textbf{32.1} & \textbf{63.3} \\
0.50 & \underline{69.9} & \underline{77.4} & \underline{33.5} & \underline{65.0} & \underline{30.4} & \underline{61.2} \\
0.75 & 69.1 & 76.7 & 29.8 & 61.9 & 26.7 & 57.9 \\
0.95 (Conformity-biased) & 67.7 & 75.3 & 21.5 & 54.5 & 18.9 & 50.1 \\
\bottomrule
\end{tabular}}
\end{table}

The default $\rho=0.20$ achieves the best values in this sweep, while nearby and higher quantiles degrade gradually before becoming strongly conformity-biased. The pattern matches the causal probe: too small a quantile routes too many tokens to repulsion and can perturb scaffolding, whereas too large a quantile routes too many tokens to attraction and suppresses high-entropy forks. The exact numerical boundary is therefore less important than recomputing a valid trajectory-local quantile and maintaining opposite signs for the two entropy regimes.

\begin{table}[H]
\centering
\footnotesize
\setlength{\tabcolsep}{4pt}
\renewcommand{\arraystretch}{1.12}
\captionsetup{width=0.96\textwidth}
\caption{Full design-space exploration for DASD (Qwen3-1.7B), Panel A: routing variable. Each row replaces the entropy router while holding the direction map and reliability gate fixed.}
\label{app:tab:design_space_full}
\makebox[\linewidth][c]{%
\begin{tabularx}{\linewidth}{@{}>{\raggedright\arraybackslash}p{0.16\linewidth}>{\raggedright\arraybackslash}X rrr>{\raggedright\arraybackslash}p{0.22\linewidth}@{}}
\toprule
\textbf{Variant} & \textbf{Replacement being tested} & \textbf{AIME24} & \textbf{AIME25} & \textbf{$E(y)$} & \textbf{Takeaway} \\
\midrule
Position proxy & Sinusoidal relative position; tests whether a simple location prior explains the gains. & 17.8 & 15.9 & 1.6 & Position is not enough; the update needs uncertainty, not chronology. \\
Token frequency & Inverse training-corpus frequency; tests whether rare tokens are the effective fork signal. & 20.4 & 17.7 & 2.1 & Rarity recovers little exploration and weak accuracy. \\
Gradient norm & $\|\nabla_\theta\log\pi_\theta(o_t)\|$; tests an update-magnitude proxy. & 25.1 & 22.8 & 2.8 & Magnitude helps but does not distinguish scaffolding from forks. \\
Attention entropy & Mean attention-head entropy at token $t$. & \underline{30.9} & \underline{27.9} & \underline{3.9} & A strong uncertainty proxy, but less direct than predictive entropy. \\
\rowcolor{bsdcol!8}
\textbf{Student entropy $H_t$} & $\mathcal{H}[\pi_\theta(\cdot\mid x,o_{<t})]$; DASD default. & \textbf{35.0} & \textbf{32.1} & \textbf{4.3} & Best because it directly measures the student's branching uncertainty. \\
\bottomrule
\end{tabularx}}
\end{table}

\begin{table}[H]
\centering
\footnotesize
\setlength{\tabcolsep}{4pt}
\renewcommand{\arraystretch}{1.12}
\captionsetup{width=0.96\textwidth}
\caption{Full design-space exploration for DASD (Qwen3-1.7B), Panel B: direction map. The routing variable and reliability gate are held fixed.}
\label{app:tab:design_space_direction}
\makebox[\linewidth][c]{%
\begin{tabularx}{\linewidth}{@{}>{\raggedright\arraybackslash}p{0.16\linewidth}>{\raggedright\arraybackslash}X rrr>{\raggedright\arraybackslash}p{0.22\linewidth}@{}}
\toprule
\textbf{Variant} & \textbf{Replacement being tested} & \textbf{AIME24} & \textbf{AIME25} & \textbf{$E(y)$} & \textbf{Takeaway} \\
\midrule
OPSD: constant $+1$ & Uniform attraction to the privileged self-teacher. & 15.0 & 9.2 & 0.7 & Collapses exploration at high-$H_t$ forks. \\
Novelty: constant $-1$ & Uniform repulsion from the privileged self-teacher. & 12.0 & 10.6 & 5.6 & Preserves markers but corrupts low-$H_t$ execution. \\
Hard threshold & Binary $\operatorname{sign}(\tau_\rho^{(i)}-H_t)$ routing. & 31.5 & 28.4 & 4.0 & Correct sign pattern helps, but discontinuity is brittle near the boundary. \\
Linear ramp & Clipped linear map from normalized entropy to $[-1,1]$. & \underline{33.6} & \underline{30.6} & \underline{4.1} & Smoothness helps, but saturation is less stable than $\tanh$. \\
\rowcolor{bsdcol!8}
\textbf{$\tanh$ direction} & Smooth signed map in Eq.~\eqref{eq:zeta}; DASD default. & \textbf{35.0} & \textbf{32.1} & \textbf{4.3} & Best trade-off between stable scaffolding and preserved forks. \\
\bottomrule
\end{tabularx}}
\end{table}

\begin{table}[H]
\centering
\footnotesize
\setlength{\tabcolsep}{4pt}
\renewcommand{\arraystretch}{1.12}
\captionsetup{width=0.96\textwidth}
\caption{Full design-space exploration for DASD (Qwen3-1.7B), Panel C: reliability gate. The entropy router and signed direction map are held fixed.}
\label{app:tab:design_space_gate}
\makebox[\linewidth][c]{%
\begin{tabularx}{\linewidth}{@{}>{\raggedright\arraybackslash}p{0.16\linewidth}>{\raggedright\arraybackslash}X rrr>{\raggedright\arraybackslash}p{0.22\linewidth}@{}}
\toprule
\textbf{Variant} & \textbf{Replacement being tested} & \textbf{AIME24} & \textbf{AIME25} & \textbf{$E(y)$} & \textbf{Takeaway} \\
\midrule
No gate & Apply signed pressure at all positions. & \underline{29.4} & \underline{26.3} & \underline{3.8} & Noisy small gaps can dominate the update. \\
Fixed threshold & Hand-tuned $\mathbf{1}[|\delta_t|>c]$. & 28.6 & 25.6 & 3.7 & Adds a validation-sensitive hyperparameter. \\
Magnitude-only & Use $|\bar\delta_t|$ without thresholding. & 30.8 & 27.7 & 4.0 & Better, but still overreacts to weak teacher fluctuations. \\
\rowcolor{bsdcol!8}
\textbf{Gap-reliability gate} & $\sigma(|\bar\delta_t|-1)$; DASD default. & \textbf{35.0} & \textbf{32.1} & \textbf{4.3} & Best because it is scale-free and suppresses unreliable local contrasts. \\
\bottomrule
\end{tabularx}}
\end{table}

\noindent\textbf{Takeaway from the ablation.}
The three panels support the full mechanism. DASD is not simply better because it changes the magnitude of a token loss: replacing entropy with weaker proxies loses much of the gain; replacing signed routing with uniform signs reproduces the two failures diagnosed in Appendix~\ref{app:sec3_detailed}; and removing the gap gate makes the method react to weak privileged-teacher fluctuations. The result is an evidence chain from causal diagnosis to design choice.

\subsection{Self-distillation gains scale with model size}\label{app:scaling}

We next ask whether DASD's advantage is a small-model artefact or whether it survives, or strengthens, with scale. Section~\ref{sec:experiments} reported the three Qwen3 scales jointly; here we plot the macro Avg@16 trend explicitly so that the scaling behavior of direction-adaptive self-distillation can be read off a single figure.

\noindent\textbf{Setup.}
For each of three Qwen3 base checkpoints (1.7B, 4B, 8B) we report Avg@16 macro-averaged over the six mathematical-reasoning benchmarks used in the main table (MATH500, Minerva, HMMT25, OlympiadBench, AIME24, AIME25). The shaded band is one standard error of the benchmark-level mean ($\mathrm{std}/\sqrt{6}$); it captures dispersion across tasks, not seed variance, and is reported transparently for that reason.

\begin{figure}[H]
\centering
\includegraphics[width=0.62\linewidth]{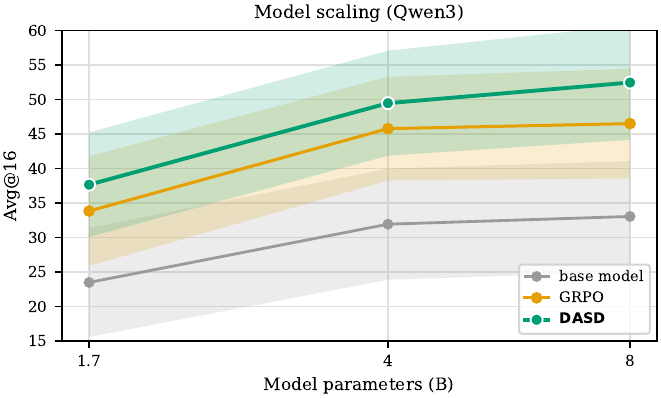}
\caption{\textbf{DASD improves with model size.} Macro Avg@16 across six mathematical-reasoning benchmarks for the Qwen3 base, GRPO, and DASD checkpoints at three scales. DASD widens its margin over GRPO from $+3.9$ points at 1.7B to $+6.0$ points at 8B, while both methods keep pulling away from the untrained base. The trend is consistent with the appendix-level interpretation that direction-adaptive routing rewards a more competent self-teacher: as the privileged trace becomes more reliable at larger scale, the signed-pressure update extracts proportionally more credit from each high-entropy fork. Shaded bands report $\pm 1$ standard error of the per-benchmark mean (six benchmarks per point); they are a measure of task spread rather than seed variance.}
\label{fig:model_scaling}
\end{figure}

\noindent\textbf{Findings.}
DASD is the strongest method at every scale tested and grows fastest with size: $37.7 \to 49.5 \to 52.5$ across Qwen3 1.7B/4B/8B (a $+14.8$-point absolute gain), versus GRPO's $33.8 \to 45.8 \to 46.5$ ($+12.7$) and the base model's $23.5 \to 31.9 \to 33.0$ ($+9.5$). The DASD$-$GRPO gap is stable from 1.7B to 4B and then widens at 8B ($+3.9 \to +3.7 \to +6.0$), consistent with the diagnosis in Appendix~\ref{app:sec3_detailed}: signed teacher pressure depends on a privileged trace that becomes more accurate as the underlying student improves, so larger students unlock proportionally more credit from each high-entropy fork. The base-vs-RLVR gap also widens with scale, but DASD widens it further, suggesting that the residual headroom GRPO leaves on the table is itself a property of token-level credit assignment rather than of capacity.

\noindent\textbf{Caveats.}
The figure summarises a single training run per (method, size) configuration; the band reflects spread across the six benchmarks, not seed variance, and we do not claim significance against GRPO at any single point. The qualitative ordering (DASD\,$>$\,GRPO\,$>$\,base) is, however, preserved on every individual benchmark in Table~\ref{tab:full_results}, which removes the most plausible aggregation artefact.

\begin{figure}[H]
\centering
\widegraphics[0.78\linewidth]{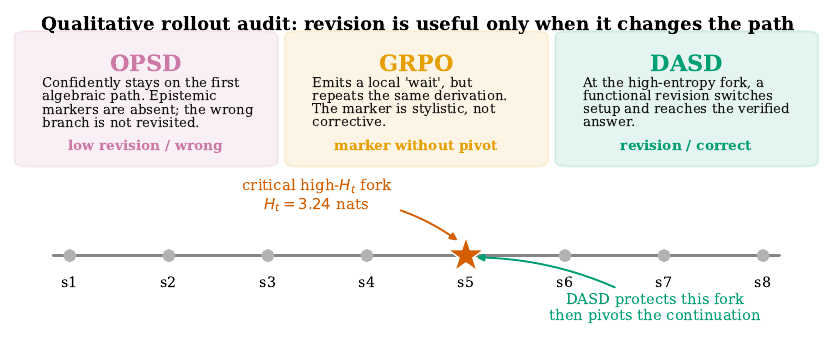}
\caption{\textbf{Qualitative rollout audit schematic.} The figure explains the behavioral distinction quantified by Table~\ref{tab:epistemic_recovery} and the causal intervention in Figure~\ref{fig:causal_interventions}. OPSD lacks useful revision, GRPO may emit a marker without changing path, while DASD protects the high-entropy fork and turns the marker into a functional pivot.}
\label{fig:case_study}
\end{figure}

\begin{figure}[H]
\centering
\widegraphics[0.84\linewidth]{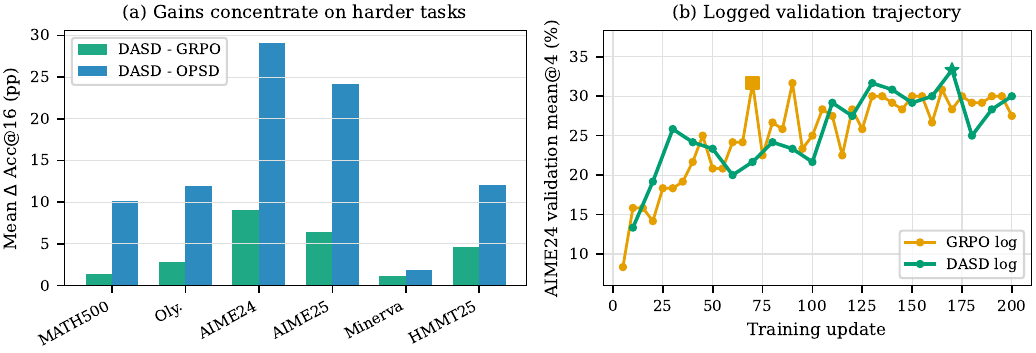}
\caption{\textbf{Difficulty scaling and logged validation trajectory.} \textbf{(a)} Mean Avg@16 gains are computed directly from Table~\ref{tab:full_results} by averaging the three model scales for each benchmark. Gains over OPSD are largest on tasks where uniform teacher attraction collapses exploration; gains over GRPO are most visible on harder competition-style benchmarks. \textbf{(b)} Logged AIME24 mean@4 validation checkpoints for Qwen3-1.7B show the observed training trajectory for DASD and GRPO; markers are measured checkpoints from the repository logs.}
\label{fig:scaling}
\end{figure}

Figure~\ref{fig:case_study} is a readable audit schematic for the mechanism already tested quantitatively. Figure~\ref{fig:scaling} provides two additional checks. First, aggregating benchmark gaps by task shows that the benefit is not a uniform formatting artifact: the largest margins occur where multiple reasoning branches are plausible. Second, the logged AIME24 validation trajectory confirms the measured checkpoint path behind the appendix figure. Together, these plots extend the appendix evidence beyond the figures shown in the main text.


\section{Extended Related Work}\label{app:related_work}

This appendix expands Section~\ref{sec:related} into a strict superset, adding the process-supervision and step-level reward-modeling literature, the broader on-policy distillation taxonomy (OPD surveys, black-box and multi-teacher variants, long-context and code extensions), failure-mode analyses of uniform teacher attraction, and the reasoning-elicitation, self-improvement, and tool-use lines that the main text omits for space. Each subsection ends by relating the cluster to DASD; the closing ``Positioning of DASD'' paragraph then states the contribution against all clusters jointly.

\noindent\textbf{RLVR and outcome-level credit assignment.}
Reinforcement learning with verifiable rewards (RLVR) has become a dominant post-training recipe for eliciting long-horizon reasoning in LLMs. It builds on the policy-gradient lineage of PPO~\citep{schulman2017ppo} and generalized advantage estimation~\citep{schulman2016gae}, and on the RLHF formulation of preference- and feedback-based fine-tuning~\citep{christiano2017deeprl,ouyang2022instructgpt}, but replaces the learned reward model with a programmatic verifier on tasks where final answers can be checked exactly~\citep{cobbe2021gsm8k}. GRPO~\citep{shao2024deepseekmath} and refinements such as DAPO~\citep{yu2026dapo} have shaped recent mathematical-reasoning systems including DeepSeek-R1~\citep{guo2025deepseekr1}, Qwen3~\citep{yang2025qwen3}, and Olmo~3~\citep{olmo2026}. The strength of RLVR is that the verifier is sparse but trustworthy: it can judge final answers without training a learned reward model. The weakness is token-level credit assignment. Every token in a sampled trajectory inherits the same sequence-level signal, even though some tokens are decisive branch points and others are routine algebraic scaffolding.

\noindent\textbf{Token-level and process-level densification.}
One family of methods addresses this sparsity with auxiliary process reward models or step-level value estimators that score intermediate reasoning steps~\citep{uesato2022solving,lightman2024letsverify,wang2024mathshepherd,zheng2024processbench}. These methods provide finer-grained supervision but introduce additional modeling cost and possible reward-model bias. A second family stays inside the verifier-only pipeline and uses model-internal proxies to decide which tokens should receive more learning signal. High-entropy token work is the closest example: \citet{wang2025highentropy} show that uncertain ``forking tokens'' carry much of RLVR's reasoning gain, \citet{DBLP:conf/aaai/ChengHZDZZW26} use entropy to encourage exploratory reasoning, and \citet{he2026rethinkingtokenlevelcreditassignment} decompose credit by entropy and reward polarity. Other token-credit methods emphasize signed base-to-RLVR change~\citep{huang2026directionrlvrupdatesllm}, suppress rare low-entropy spurious tokens~\citep{liu2026stapostabilizingreinforcementlearning}, or shape low-entropy segments by correctness~\citep{chen2025highentropyexplorationcorrectnessawarelowentropy}. Entropy-aware distillation and SRPO-style variants similarly use uncertainty to focus or filter supervision~\citep{jin2026entropy,li2026srpo}. DASD agrees that entropy is the right token-level axis, but uses it for a different decision: not only \emph{how much} to update a token, but which \emph{direction} the privileged teacher pressure should take.

\noindent\textbf{On-policy distillation and privileged self-teachers.}
Knowledge distillation has a long lineage in compressing ensembles or strong teachers into student networks~\citep{hinton2015distilling,kim2016sequence,furlanello2018born}. On-policy distillation (OPD)~\citep{agarwal2024onpolicy} evaluates a teacher on the student's own rollouts and uses the teacher's token-level scores as dense supervision. Because rollouts are generated by the student, OPD avoids the distribution shift of purely off-policy supervised fine-tuning. The practical cost is the need for a stronger external teacher. The recent OPD literature has expanded along several axes: \citet{song2026survey} survey the design space along feedback, teacher-access, and granularity dimensions, and \citet{li2026rethinking} characterize when OPD succeeds or fails in terms of student--teacher pattern compatibility. Context- and experience-based variants distill behaviour conditioned on prompts or deployment trajectories rather than ground-truth solutions~\citep{ye2026opcd,ye2026oel}. Black-box variants relax white-box teacher access using adversarial or semi-on-policy formulations~\citep{ye2025gad,chen2026soda}, and multi-teacher debate has been used to break the single-teacher capability ceiling~\citep{wang2026madopd}. \citet{fu2026revisiting} analyze empirical failure modes of sampled-token OPD and propose teacher-supported local comparisons, while \citet{jin2026entropy} balance reverse and forward KL using student entropy. Reasoning compression via on-policy self-policy distillation~\citep{sang2026crisp} and self-revision-based dense supervision from binary rewards~\citep{he2026sdzero} demonstrate that the same self-teacher mechanism can shape solution length or convert sparse signals into dense token-level credit without external rewards. Long-context~\citep{zhang2026opsdl} and code-generation~\citep{zhang2026ssd} variants extend the on-policy self-distillation idea to settings beyond mathematical reasoning. OPSD removes the external-teacher dependency by using the same model as both student and teacher, with the teacher conditioned on privileged information $r$~\citep{zhao2026opsd,hubotter2026sdpo,li2026srpo}. The most directly comparable cluster is privileged-context self-distillation: \citet{penaloza2026pidistill} formulate this as a joint teacher--student objective ($\pi$-Distill) with a privileged-information-conditioned teacher; \citet{stein2026gates} introduce consensus gating to filter unreliable tutor signals when ground-truth labels are unavailable; and \citet{zhang2026piplay} extend privileged self-distillation to a multi-agent self-play regime that produces training data without external annotations. RLSD~\citep{yang2026rlsd} gives the formulation most closely related to DASD: the teacher--student log-evidence ratio can be interpreted as a token-level credit adjustment. However, all of these prior privileged self-distillation methods keep the teacher-pressure direction fixed and debate only magnitude, filtering, scheduling, or where to apply the signal---never \emph{which direction} the signal should point.

\noindent\textbf{Failure modes of uniform teacher attraction.}
Recent analyses show that privileged conditioning changes not only confidence but also reasoning style. A solution-conditioned self-teacher can suppress epistemic markers such as ``wait'', ``hmm'', and ``alternatively'', shorten responses, and encourage the student to act as if privileged information were available at inference time~\citep{kim2026whyopsd,yang2026rlsd,fu2026revisiting}. This is especially harmful on competition-style problems where an unprivileged solver must branch, revise, and recover. DASD's diagnosis is that uniform attraction is off-diagonal: it applies teacher conformity exactly where high-entropy forks need preserved alternatives.

\noindent\textbf{Reasoning elicitation, self-improvement, and tool use.}
A parallel line of work elicits or refines reasoning behavior without modifying the verifier. Chain-of-thought prompting and zero-shot variants showed that step-by-step rationalization unlocks reasoning at scale~\citep{wei2022cot,kojima2022zeroshot,wei2022emergent}, and self-consistency aggregates diverse rollouts to improve final-answer accuracy~\citep{wang2023selfconsistency}. Tree-of-Thoughts and related search-based decoding methods generalize this to deliberate exploration over multiple reasoning paths~\citep{yao2023tot}. Self-improvement methods bootstrap reasoning from the model itself: STaR fine-tunes on self-generated rationales filtered by verifiable answers~\citep{zelikman2022star}, ReST and similar expectation--maximization variants iterate this scheme at scale~\citep{singh2024beyond}, and Self-Refine refines outputs via iterative self-feedback~\citep{madaan2023selfrefine}. \citet{huang2024selfcorrect} caution that without external feedback, intrinsic self-correction can degrade rather than improve performance, motivating the verifier-anchored design DASD adopts. Mathematical reasoning has also been advanced through targeted data and tool-use pipelines such as MetaMath~\citep{yu2024metamath}, WizardMath~\citep{luo2024wizardmath}, and ToRA~\citep{gou2024tora}; these are orthogonal contributions to DASD's token-level credit-assignment perspective.

\noindent\textbf{Positioning of DASD.}
DASD keeps the verifier-only and self-distillation advantages of OPSD: no external teacher is required, the privileged branch is used only during training, and token-level guidance is computed on the student's own rollouts. Its distinction is the sign of the teacher field. Low-entropy scaffolding receives attractive pressure because the path is already mostly determined and the teacher can stabilize execution. High-entropy forks receive repulsive pressure because premature attraction would collapse the alternatives needed for reasoning search. This sign routing is computed from student entropy alone and therefore does not require privileged information at inference time.


\section{Limitations and Discussion}\label{app:limitations_discussion}

This appendix records the boundaries of the DASD study and outlines directions that follow naturally from the directional credit-assignment view introduced in the main text. The limitations are properties of the privileged self-distillation paradigm that DASD inherits, not consequences of direction routing itself. The discussion then revisits the empirical and design choices made in Sections~\ref{sec:method}--\ref{sec:analysis} and asks where the same directional-credit reasoning may apply beyond mathematical reasoning.

\subsection{Limitations}\label{app:limitations}

\noindent\textbf{Privileged-trace requirement during training.}
DASD adopts the OPSD setup~\citep{zhao2026opsd,hubotter2026sdpo,yang2026rlsd}: each training prompt is paired with a verified solution trace $r$ that conditions the privileged self-teacher branch. Two properties bound the cost of this requirement. First, $r$ is consumed only during training---the deployed policy generates without any privileged context at inference, exactly as in standard RLVR. Second, the same trace is already required by every OPSD-family baseline we compare against (OPSD, SDPO, RLSD, SRPO), so DASD imposes no additional annotation burden beyond what its closest competitors assume. The training-time dependence on $r$ is therefore a property of the privileged self-distillation paradigm rather than an artifact of direction routing.

\noindent\textbf{Two-forward training cost.}
Each DASD update performs two forward passes per token, one through the student conditioned on $(x,o_{<t}^{(i)})$ and one through the privileged self-teacher conditioned on $(x,r,o_{<t}^{(i)})$. The cost of the privileged branch is identical to OPSD, SDPO, and RLSD; DASD changes how the resulting log-evidence gap $\delta_t^{(i)}$ is routed, not how it is computed. The training dynamics in Figure~\ref{fig:training_dynamics} indicate that DASD does not require additional updates relative to its OPSD-family baselines, so the per-step overhead is the only compute difference, and it is the standard one for this class of methods.

\subsection{Discussion and Future Directions}\label{app:discussion}

\noindent\textbf{Direction is a first-class axis of teacher supervision.}
Most token-credit refinements of RLVR---high-entropy policy gradients~\citep{wang2025highentropy}, RLSD's evidence-ratio rescaling~\citep{yang2026rlsd}, SRPO's reliability-aware weighting~\citep{li2026srpo}, and entropy-aware OPD~\citep{jin2026entropy}---adjust how strongly a token should follow the teacher while leaving the direction of teacher pressure fixed at conformity. The direction-map ablation in Table~\ref{tab:design_space} shows that uniform attraction and uniform repulsion both fail, and the resulting gaps to DASD are comparable to the gap between DASD and the verifier-only baseline. We therefore view the direction of supervision, not only its magnitude, as a first-class design axis of dense token-level objectives. A natural follow-up is to revisit other dense-supervision pipelines---RLAIF, distillation from larger external teachers, and process-reward models---and ask whether the same population of high-entropy decision tokens is receiving the wrong direction of supervision rather than insufficient magnitude.

\noindent\textbf{From hand-crafted entropy routing toward learned routers.}
DASD uses student entropy $H_t^{(i)}$ as a hand-designed proxy for token-level decision points (Eq.~\eqref{eq:zeta}), and the design-space ablation in Table~\ref{tab:design_space} shows that this proxy outperforms positional, frequency, gradient-norm, and attention-entropy alternatives. Entropy is, however, only one observable correlate of ``where alternative continuations remain plausible''. A small auxiliary router that learns the routing coefficient $\omega_t^{(i)}$ directly from token-level features---possibly conditioned on partial trajectory history---may sharpen the boundary further, especially in settings where token entropy is less informative, such as long-horizon agent traces, retrieval-augmented reasoning, and multimodal generation. We view a learned router as an extension rather than a replacement of the entropy router: the entropy-quantile robustness documented in Appendix~\ref{app:tau_sensitivity} suggests that entropy already provides a reliable, parameter-free starting point that a learned router can refine.

\noindent\textbf{Softer privileged information beyond ground-truth traces.}
The DASD mechanism does not depend on $r$ being a complete verified solution; it requires only that conditioning on $r$ collapses the teacher's residual uncertainty in a useful direction. This opens the door to weaker privileged signals that are cheaper to obtain at scale: partial hints, intermediate verifier feedback, retrieved solution fragments, and tool or environment observations in agentic settings. Each admits the same dual-forward and signed-routing structure as DASD; the only ingredient that changes is what is prepended to the privileged branch. We see this as a promising route to extending direction-adaptive self-distillation beyond regimes where verified ground-truth traces happen to be readily available, into domains where complete reference solutions do not exist.

\noindent\textbf{Direction-adaptive supervision as a general principle.}
The token-level pathology DASD addresses---a single supervision direction applied uniformly to a structurally heterogeneous population of tokens---is, plausibly, not exclusive to mathematical reasoning. Wherever a dense token-level objective is overlaid on top of a sparse outcome signal, the same dichotomy between forking decisions and routine execution can re-emerge: alignment data containing both content-shaping and stylistic tokens, code generation interleaving syntactic glue and algorithmic choices, and agentic post-training mixing planning steps with tool-use steps. We therefore suspect that the directional credit principle isolated here generalizes beyond the OPSD failure mode, and we view testing it on alignment, code, and agentic post-training as a natural way to delineate when direction matters and when uniform pressure is sufficient.

\end{document}